\newtheorem{theorem}{Theorem}
\def\BibTeX{{\rm B\kern-.05em{\sc i\kern-.025em b}\kern-.08em
    T\kern-.1667em\lower.7ex\hbox{E}\kern-.125emX}}
\begin{document}

\title{Angle based dynamic learning rate for gradient descent\\
\thanks{Acceted in IJCNN 2023. Codes:\href{https://github.com/misterpawan/dycent}{https://github.com/misterpawan/dycent}}
}

\author{\IEEEauthorblockN{Neel Mishra}
\IEEEauthorblockA{\textit{CSTAR} \\
\textit{IIIT Hyderabad}\\
Hyderabad, India \\
neel.mishra@research.iiit.ac.in}
\and
\and 
\IEEEauthorblockN{Pawan Kumar}
\IEEEauthorblockA{\textit{CSTAR} \\
\textit{IIIT Hyderabad}\\
Hyderabad, India \\
pawan.kumar@iiit.ac.in}
}


\maketitle

\begin{abstract}
In our work, we propose a novel yet simple approach to obtain an adaptive learning rate for gradient-based descent methods on classification tasks. Instead of the traditional approach of selecting adaptive learning rates via the decayed expectation of gradient-based terms, we use the angle between the current gradient and the new gradient: this new gradient is computed from the direction orthogonal to the current gradient, which further helps us in determining a better adaptive learning rate based on angle history, thereby, leading to relatively better accuracy compared to the existing state-of-the-art optimizers. 
On a wide variety of benchmark datasets with prominent image classification architectures such as ResNet, DenseNet, EfficientNet, and VGG, we find that our method leads to the highest accuracy in most of the datasets. Moreover, we prove that our method is convergent.
\end{abstract}

\begin{IEEEkeywords}
Adam, Image Classification, Optimization, Angle, Gradient Descent, Neural Networks, ResNet
\end{IEEEkeywords}

\begin{figure*}[ht]
  \centering
  \includegraphics[width=.24\linewidth]{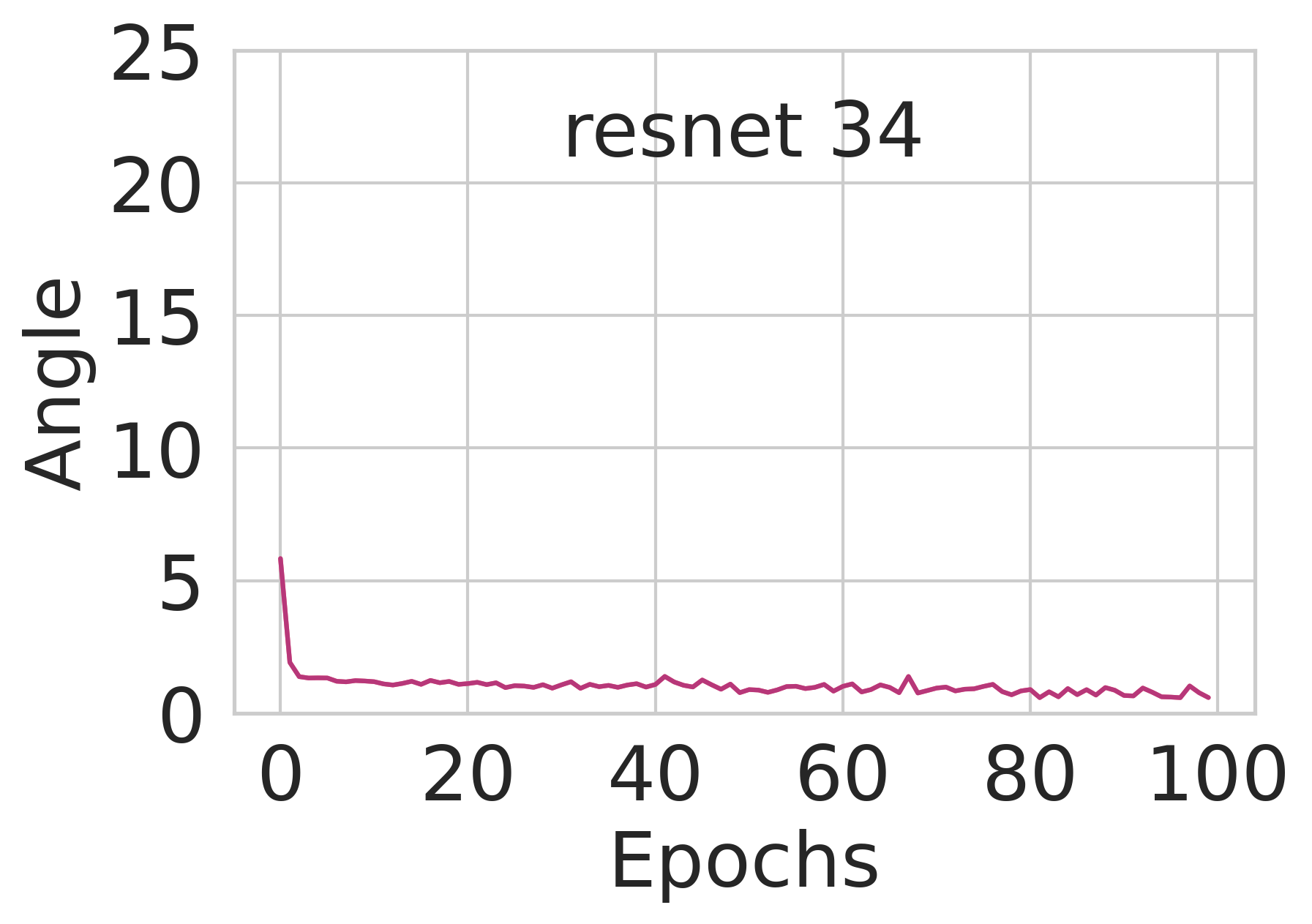}
  \includegraphics[width=.24\linewidth]{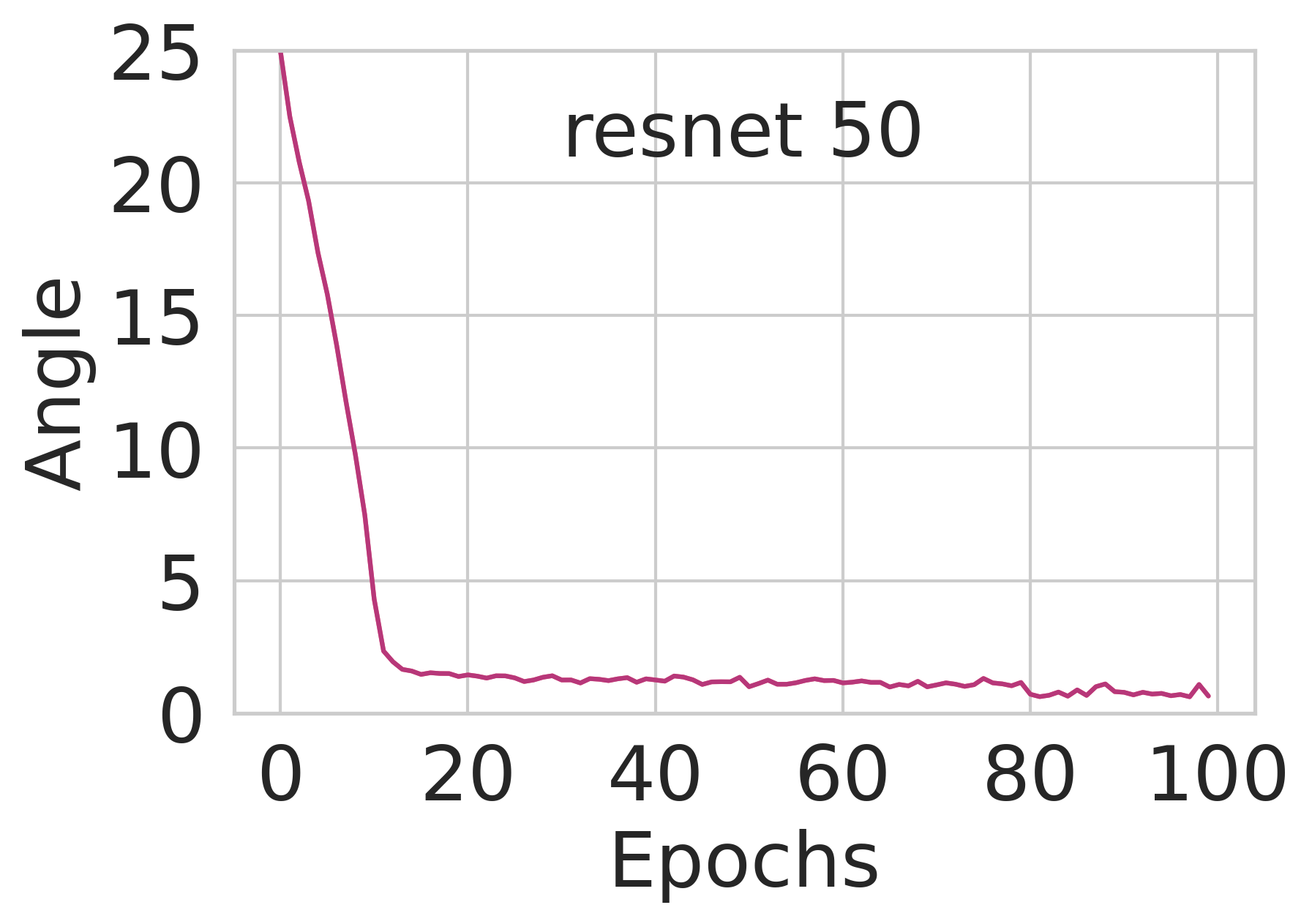}
  \includegraphics[width=.24\linewidth]{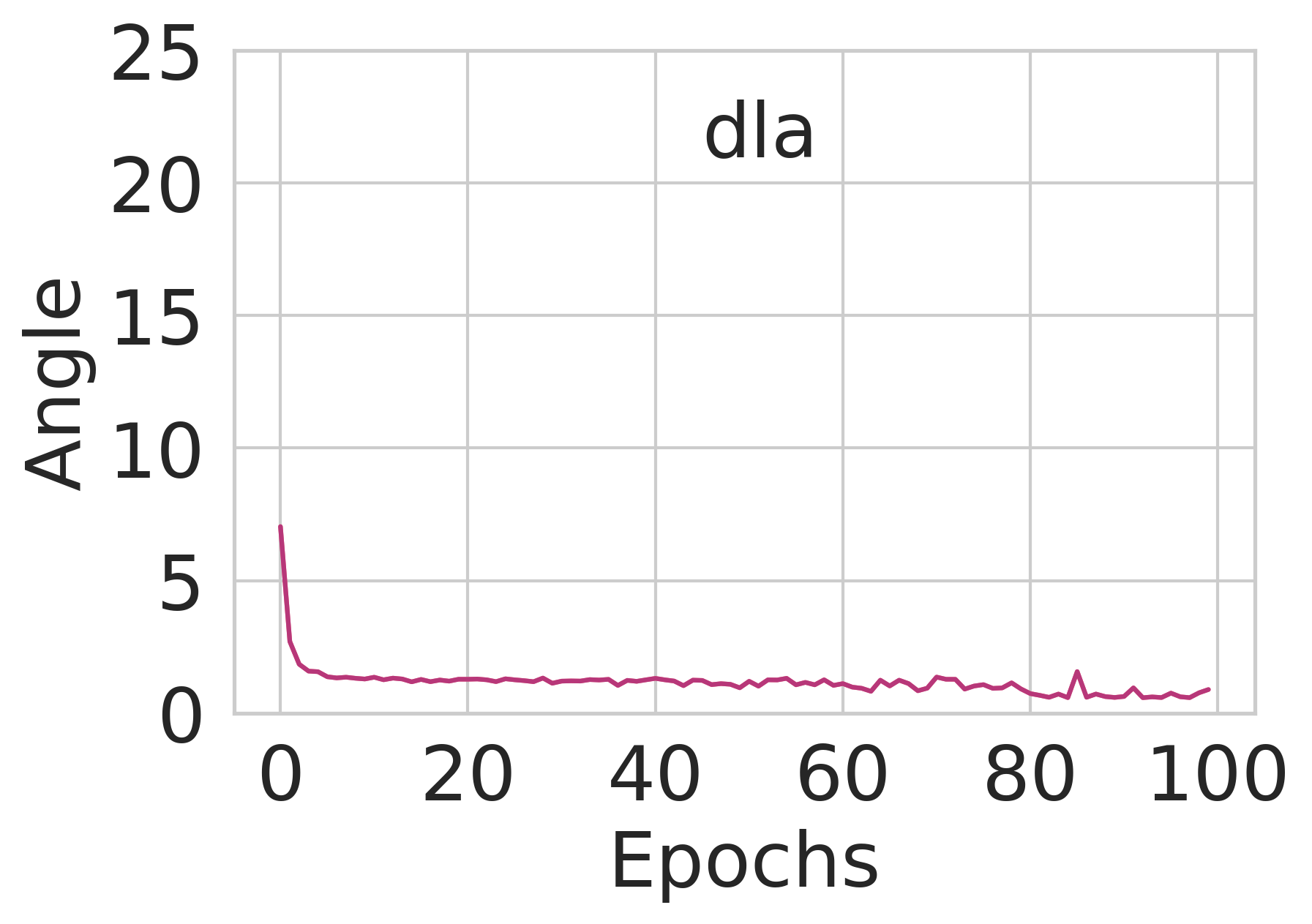}
  \includegraphics[width=.24\linewidth]{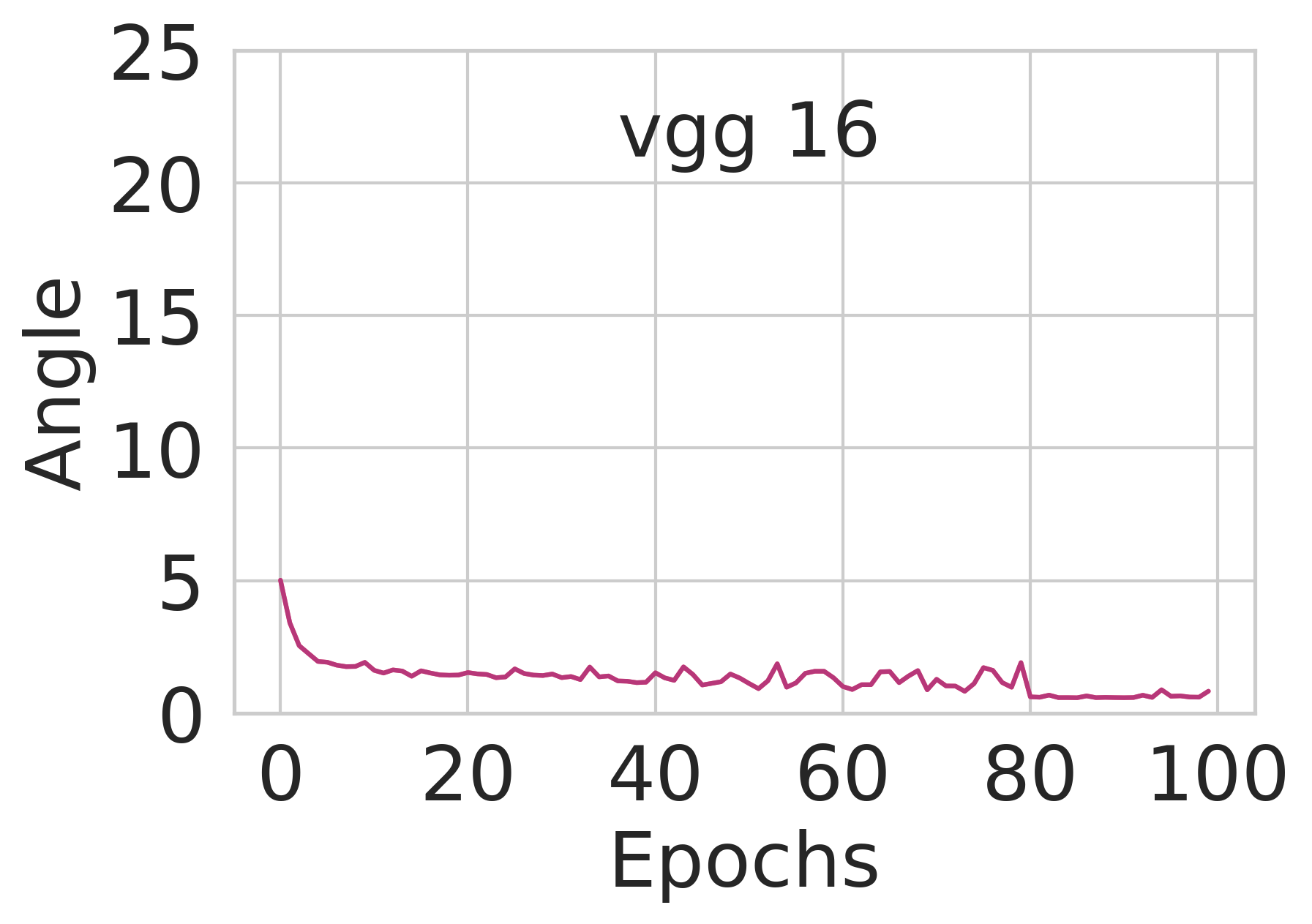}
  \caption{\label{fig:angle_results}Progression of angles with increasing epochs on CIFAR 10 dataset.  We observe that for most of the architectures, except of epochs less than 10, the angles are between 0-2 degrees. This is configured in such a way because too less of an angle and the cot becomes unstable, and attempting to have larger angles by having a larger $h$ can lead to  $g_1$ and $g_2$ (See Figure \ref{alg:algorithm}) in the algorithm \ref{alg:cap}, to lie in different localities.}
\end{figure*}%

\section{Introduction}
Stochastic gradient methods have become increasingly popular in various fields due to their effectiveness in optimizing large-scale problems and handling noisy data. In machine learning, they are extensively used to train deep learning models, such as neural networks, where they enable efficient optimization of high-dimensional, non-convex loss functions. Stochastic gradient descent (SGD) and its variants have been the cornerstone of training deep learning models in areas like computer vision, natural language processing \cite{GoodBengCour16,Mandlecha22,vaswani2017attention}, and reinforcement learning \cite{Sutton1998,mehta2023}. Additionally, stochastic optimization techniques have been adapted for online and distributed learning, facilitating real-time adaptation and scalability to massive datasets in applications like recommendation systems \cite{mishra2023,dasgupta2023review} and search engines. 

Although the popular stochastic gradient descent (SGD) is a simple method and is known to have a convergence to local minima with a small enough learning rate, it doesn't give better results compared to modern optimizers in deep learning architectures. The vanilla SGD uses the same learning rate for the entire training process and incorporates very little curvature information, leading it to get stuck to a local optimum. 

Recent approaches have tried to overcome the oscillations associated with SGD with the help of momentum \cite{sutskever2013}, authors in \cite{QIAN1999145} have argued that the reason momentum speeds up the convergence is that it brings some eigen components of the system closer to critical damping. 

\begin{figure}[]
    \centering
    \includegraphics[width=8.3cm]{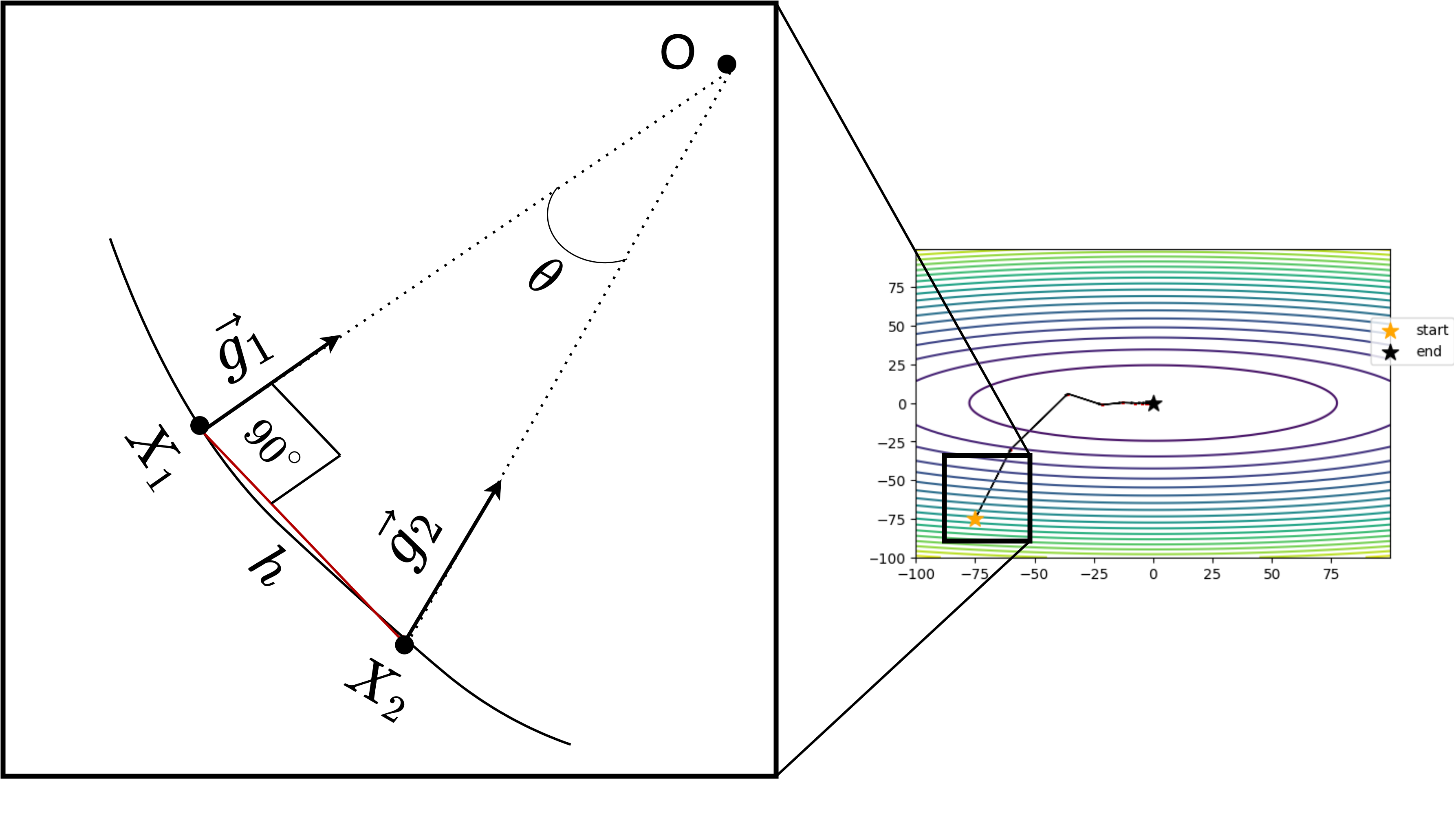}
    \caption{Illustration of one iteration of the proposed method.}
    \label{fig:method_diagram}
\end{figure}

Although momentum can deal with the oscillations in training, the problem of fixed learning rate is not addressed solely by momentum. There have been many works in the literature to address the static learning rate \cite{ruder2016overview}; the Rprop algorithm \cite{298623} had tried to accomplish a dynamic learning step for full batch gradient descent by increasing and decreasing the step size of a particular weight corresponding to the sign of the last two gradients. If the signs are opposite, then that indicates deviation, and hence the learning rate is decremented, and if the sign is positive, then the learning rate is incremented. Hence, 
the step size is adapted for every iteration.  

Another branch of optimizers like Nesterov Accelerated Gradient (NAG) \cite{nesterov1983method} include those that try to correct the gradient using the momentum to form a look-ahead vector. From this look-ahead, they see the direction of the gradient at that point, and subsequently, use that gradient to perform the weight updates from the previous point. 

Adagrad \cite{duchi2011adaptive} tries to use the expectation of the square of the gradients to scale the gradient element-wise, but it failed to learn after some epochs as the summation keep growing during the training. An extension to Adagrad is the algorithm Adadelta \cite{zeiler2012adadelta}; instead of just accumulating gradients, they use a decaying average of past updates, alongside exponentially decaying average of gradients. The method RMSProp \cite{hinton2012neural}, which was introduced in a lecture as a mini-batch version of RProp,  also uses an exponentially decaying average of gradients, but it differs from Adadelta because it doesn't use the exponentially decaying average of past updates; however, the first update of Adadelta is same as that of RMSProp.

Another recently popular optimizer Adam \cite{kingma2014adam}, can be thought of as a combination of momentum and RMSprop. Adam incorporates gradient scaling witnessed in RMSprop and use momentum via the moving average of gradients to tackle oscillations. It also includes bias corrections.

Authors in \cite{dubey2019diffgrad} introduce a friction coefficient called diffGrad friction coefficient (DFC); They compute this coefficient using a non-linear sigmoid function on the change in gradients of current and immediate past iteration. Adabelief \cite{zhuang2020adabelief}, another popular optimizer uses the square of the difference between the current gradient and the current exponential moving average to scale the current gradient. Many of the popular optimizers have played around with these four properties, namely, momentum, a decaying average of gradients or weight updates, gradient scaling, and a lookahead vector to derive their gradient update rules. 

There have been few papers that have considered the angle information between successive gradients. The authors in \cite{9343305} try to correct the obtained gradient by using the angle between the successive current and the past gradients. The authors in \cite{roy2021angulargrad} have also used the angle between successive gradients. They control the step size based on the angle obtained from previous iterations and use it to calculate the angular coefficient, essentially the {\tt tanh} of the angle between successive gradients. They claim that it reduces the oscillations between the successive gradient vectors; they also combine the gradient scaling, like RMSprop, and the expected weighted average of gradients as in ADAM.

Other notable optimization methods which are variants of Adam for deep learning are NADAM \cite{dozat2016incorporating}, Rectified Adam (RAdam) \cite{liu2019variance}, Nostalgic Adam \cite{huang2018nostalgic}, and AdamP \cite{heo2020adamp}. Preconditioning gradient methods are usually popular to accelerate the convergence of gradient methods \cite{saad2003iterative,Das_2021_WACV,9093265,das2020,kumar2013,kumar2014,kumar2013b,KUMAR20132251,BENZI2002418} for solving large sparse linear systems stemming from scientific computing applications. Similar preconditioned stochastic gradient methods have been tried \cite{stefan2011,david2015}, but they are not as effective.

In this paper, we propose a simple technique for estimating an adaptive learning rate for SGD. The learning rate is obtained by determining the angle between the current gradient $g_1$ computed at the current location $x_1$ and a new probing gradient $g_2$ computed at a new location $x_2,$ which is at an orthogonal distance from the current location. This angle is then used to determine an effective learning rate (See Algorithm \ref{alg:algorithm} for details). A graphical illustration of our method is shown in Figure \ref{fig:method_diagram}. 

\subsection{Our Contributions}
We summarize our contributions as follows:
\begin{enumerate}
    \item We propose a dynamic learning rate for vanilla SGD. Our method uses a geometric approach, which is simple, easy to implement, and very effective.
    \item We show that under certain assumptions the proposed method is convergent and satisfies the well-known Armijo's condition for sufficient decrease. 
    \item We show an extensive set of accuracy results and comparisons of prominent stochastic optimization solvers on various image classification architectures and prominent datasets.
\end{enumerate}

\section{The Proposed Method}

To facilitate an adaptive and more accurate step size, we consider a snapshot in the global gradient vector field, it is well known that the gradient vectors will be perpendicular to the tangential plane of the level curve at each point of the vector field. The existing gradient descent methods go parallel to the negative gradient $g_1$ from the point $X_1,$ instead, we first go perpendicular in the direction of $ g_1{^{\perp}}$ towards a new point $X_2$ by a small $ h$ step (a hyperparameter); the direction $ g_1{^{\perp}}$ is any random vector inside the tangential plane. At $X_2$ we calculate another negative gradient $g_2$. Let point $O$ be the intersection point of these two gradients. We now simply use these information to calculate the height of the triangle as shown in the Figure \ref{fig:method_diagram} by calculating the step size $d = h \cdot \cot{(\theta)}$ in the step 8 of the Algorithm \ref{alg:cap}, where $\theta$ is the angle between the two vectors $g_1$ and $g_2$. We then travel in the direction of normalized $g_1$ with step size $d$ as shown in step 13 of the algorithm \ref{alg:algorithm}, and essentially reach the intersection point $O$ of the two gradients; we repeat this process until convergence.

Although the perpendicular step size is fixed and static, our learning rate changes with every new iteration. The perpendicular step $h$ plays a critical role in our method; too large of a perpendicular step will lead $g_1$ and $g_2$ to point to different localities in non-convex settings. Hence, the calculation of step length will make no sense. On the other hand, if the value of $h$ is too small, then the gradient vectors will overlap, and the angle between these gradients will be too small to compute precisely. So to tackle this issue, we keep a relatively small perpendicular step size $h$, and we add some $\epsilon$ to $h$ because $\cot{\theta}$ is infinite when $\theta$ is zero. Figure \ref{fig:angle_results} shows the resulting angle for the optimal $h$ on different architectures. In steps 8 to 11 of algorithm \ref{alg:cap}, we use the decayed expectation (step 9) of the step size to double the current estimated step size (step 11), if the current estimated step size is lesser than the expected step size.


\begin{algorithm}[h!]
\caption{\label{alg:cap}Our Method}
\label{alg:algorithm}
\textbf{Input}: $X$\\
\textbf{Parameter}: $h,$ $\beta$
\begin{algorithmic}[1] 
\While {not converged}
\State $ X_1 \gets X $ \Comment{Initial random guess}
\State $ g_1 \gets -\nabla_X f(X_1)$
\State $ p_1 \gets g_1{^{\perp}} $ \Comment{Perpendicular unit vector to $g_1$}
\State $ X_2 \leftarrow X_1 - h \cdot p_1 $
\State $ g_2 \gets -\nabla_X f(X_2) $
\State $ \theta \gets \angle (g_1, g_2) + \epsilon $
\State $ d \gets  h \cdot \cot{\theta}$ \Comment{Find current step size}
\State $d_{avg} \gets \beta \cdot d_{avg} + (1-\beta) \cdot d$ \Comment{Average step size}
\If{$d < d_{avg}$}
    \State $ d \gets 2 \cdot d$
\EndIf
\State $ X \gets X_1 + d \cdot \dfrac{g_1}{\| g_1 \|} $ \Comment{Update step}
\EndWhile
\end{algorithmic}
\end{algorithm}

\begin{figure*}[ht]
\begin{center}
\begin{subfigure}[b]{\linewidth} 
\center
  \includegraphics[width=.23\linewidth]{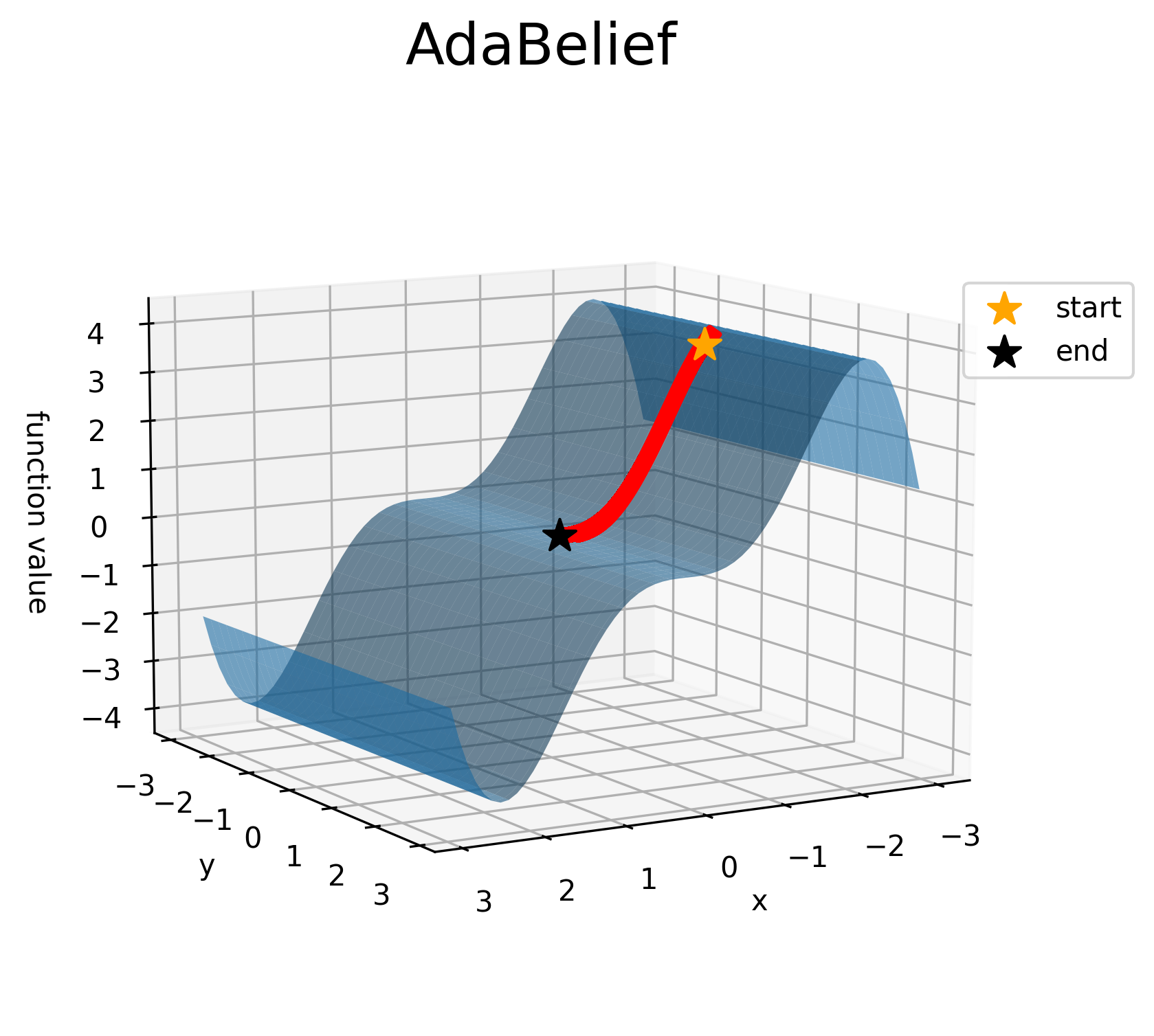}
  \includegraphics[width=.23\linewidth]{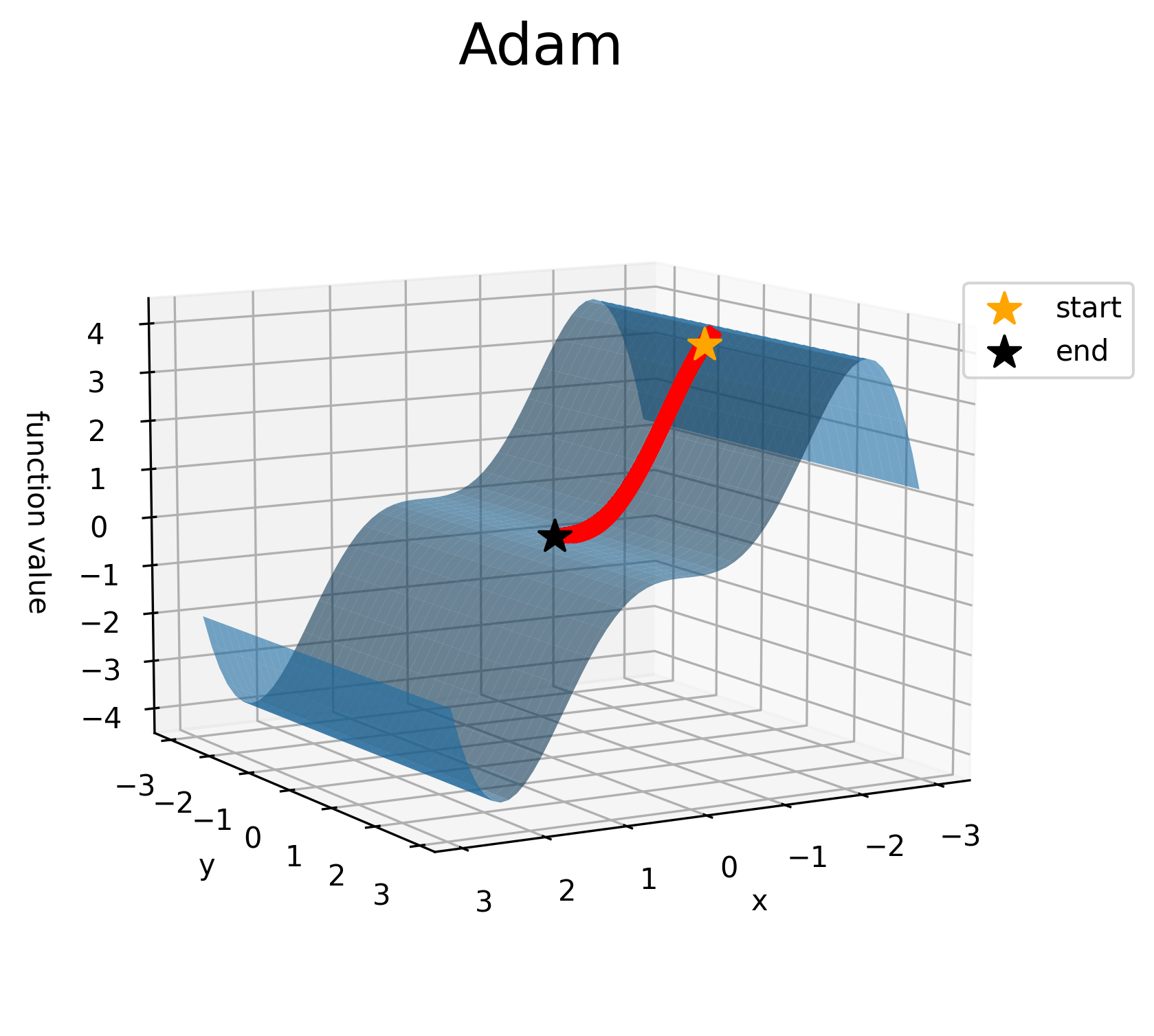}
  \includegraphics[width=.23\linewidth]{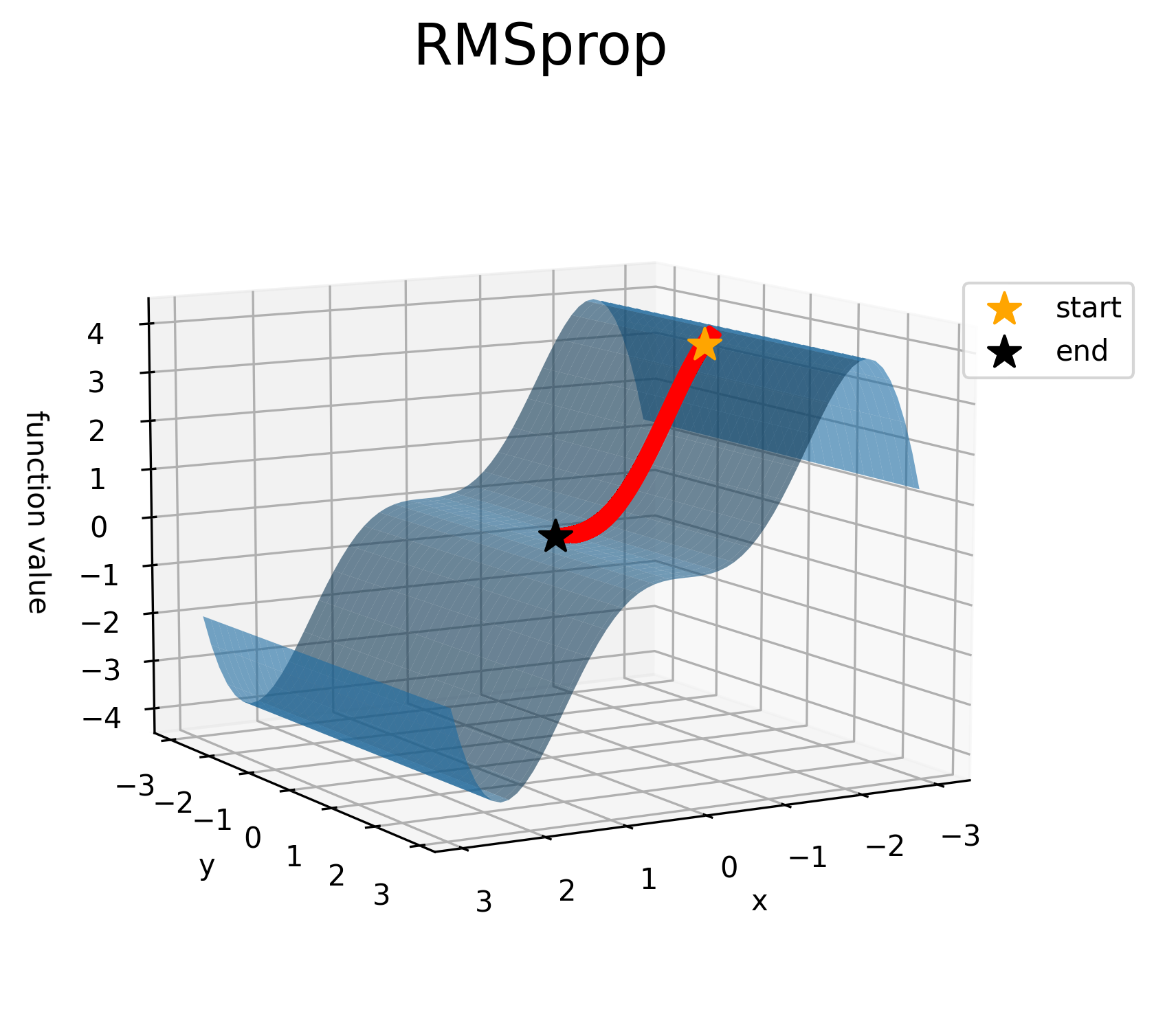}
  \includegraphics[width=.23\linewidth]{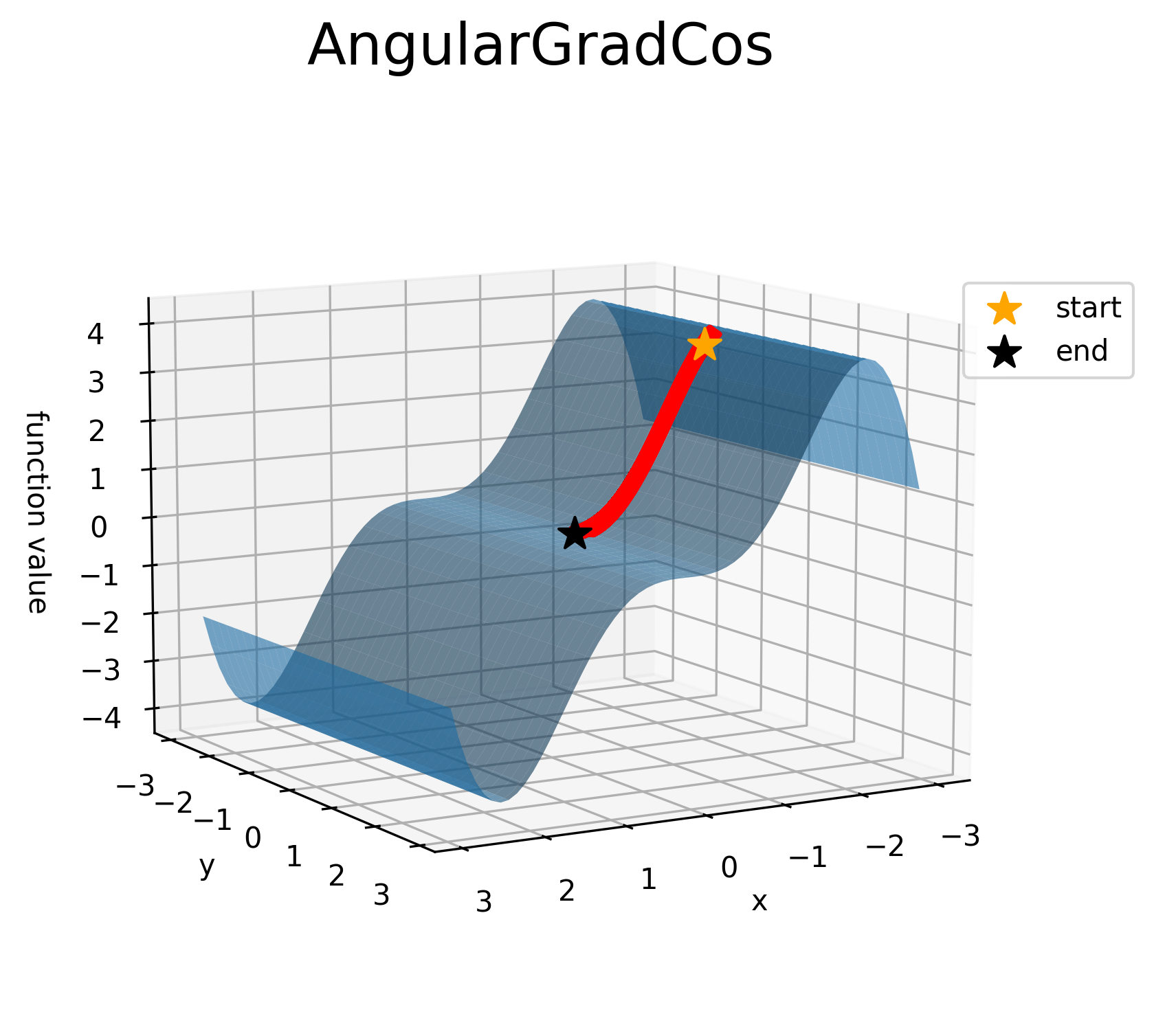}
  \includegraphics[width=.23\linewidth]{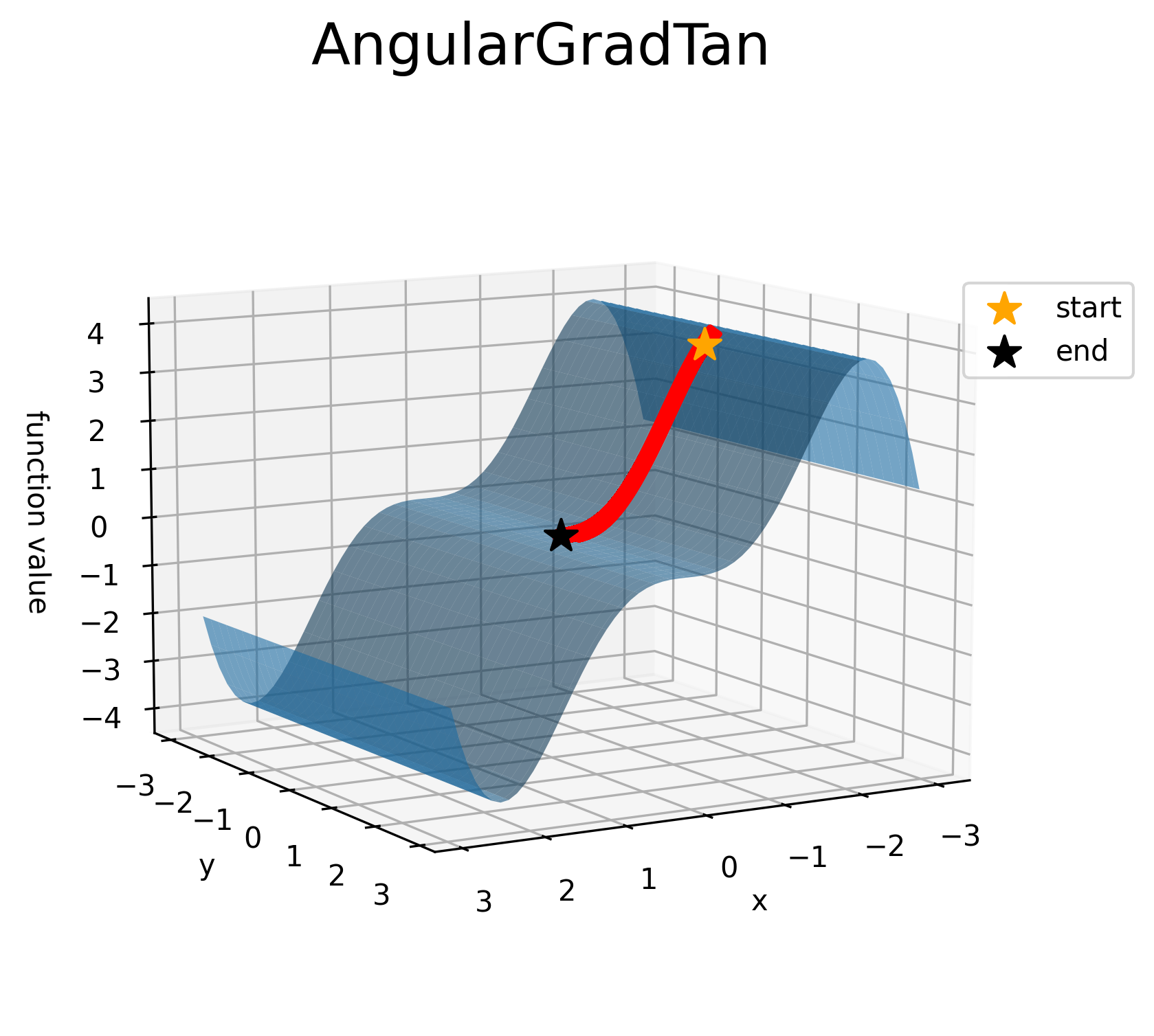}
  \includegraphics[width=.23\linewidth]{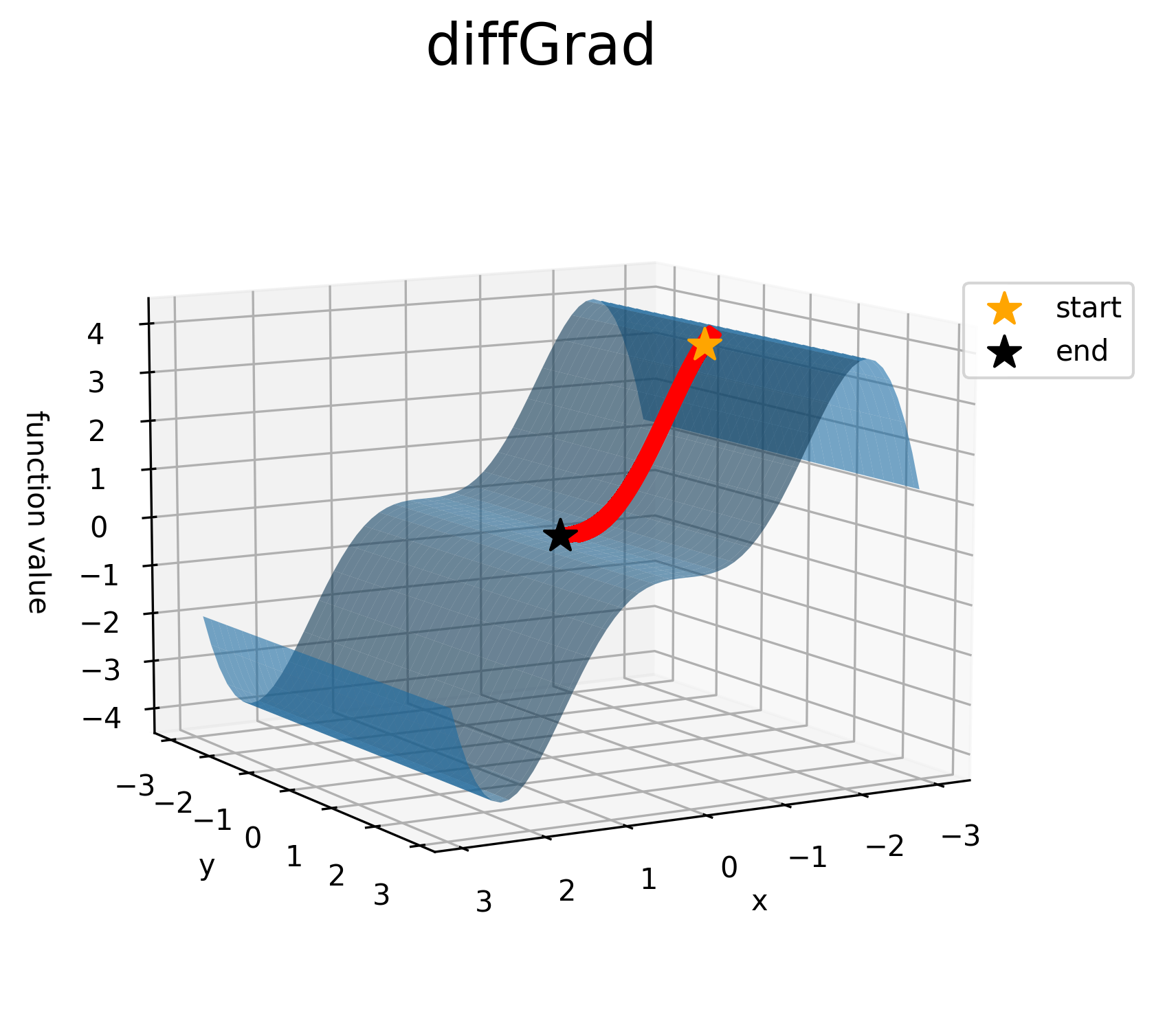}
  \includegraphics[width=.23\linewidth]{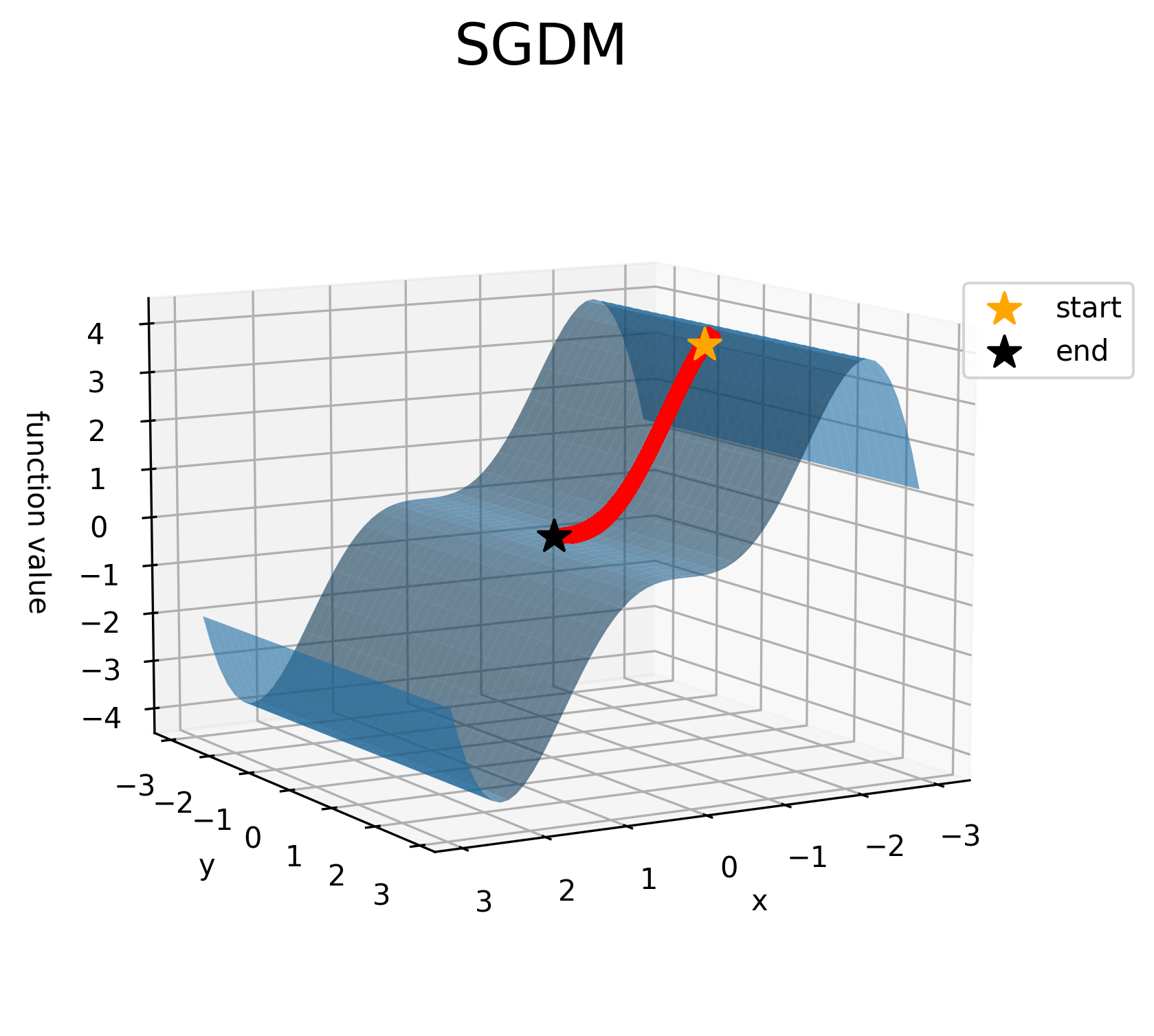}
  \includegraphics[width=.23\linewidth]{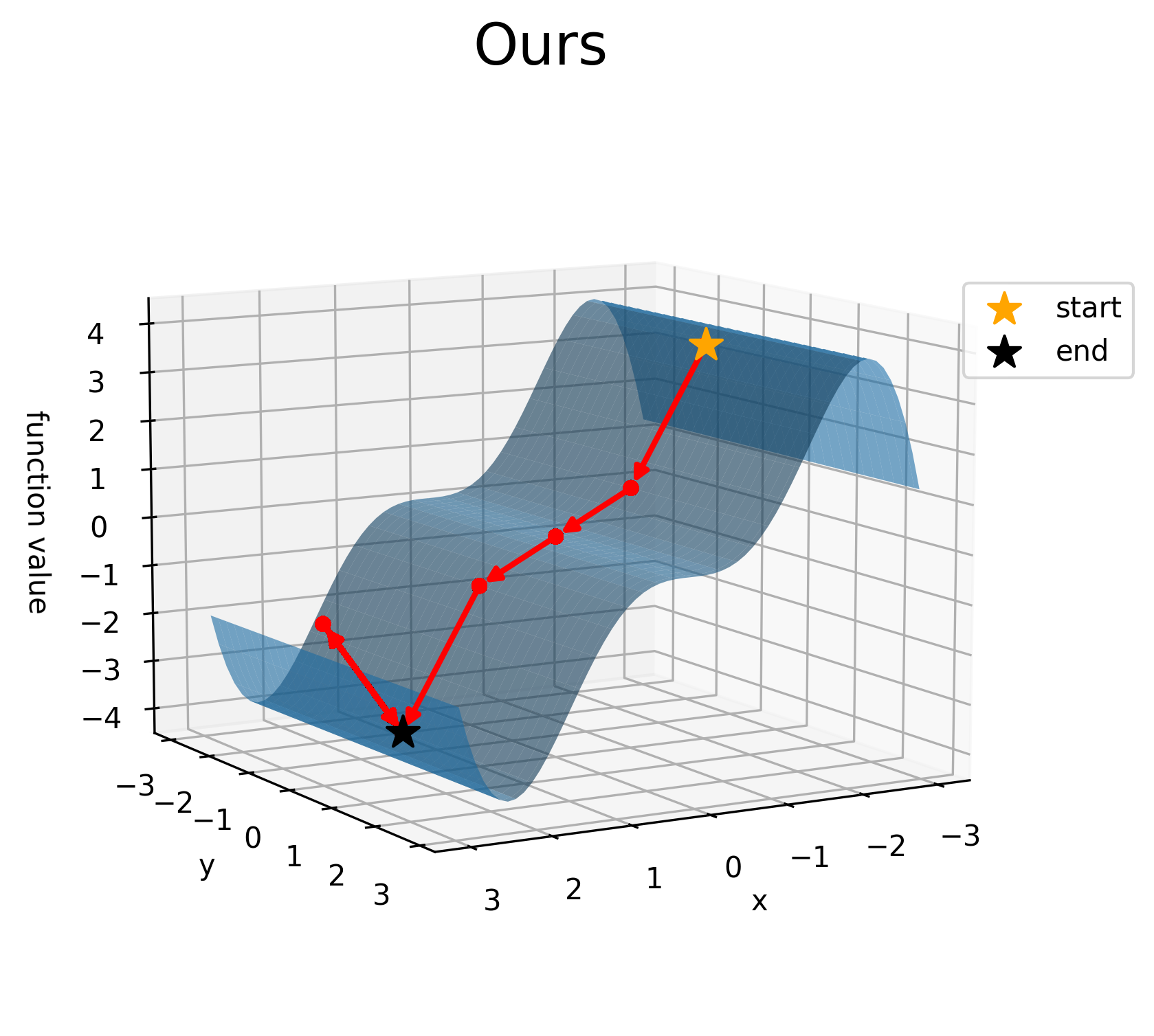}
  \caption{Toy Example A: Our method is shown as bottom right figure. We observe that while all other methods get stuck in the local minima, our method can descents further. The black star denotes the terminal point, while the yellow star denotes the starting point.}
  \label{subfigure:toy_example_1}
\end{subfigure}
\end{center}
\end{figure*}

  

\section{Experiments on Some Toy Examples}
In this section, we experimentally demonstrate the potential of our method to overcome bad minima; for this, we have crafted  two challenging surfaces using the tool \cite{BenJoffe}. We have also shown the 3D visualization for the same in Figures \ref{subfigure:toy_example_1} and \ref{subfigure:toy_example_2}. For the experimental setup on these toy examples, we set the learning rate $\eta=1e-2$, we keep the same initial point for all the methods, and we train them up to 1000 iterations. 

\subsection{Toy Example A}
In the toy example $A,$ we consider the following simple function 
\begin{align}
 f(x,y) = -y^2\sin{x}.
\label{eqn:toyA}   
\end{align}
We see in Figure \ref{subfigure:toy_example_1} that the surface corresponding to this function is wavy and has multiple local minima. The initialization point is $(-2.0,0.0)$ as shown in the Figure \ref{subfigure:toy_example_1}. Although our trajectory is the same as that of SGD without momentum, we can see that while all the optimizers get stuck at the relatively bad local minima, our proposed optimizer goes to a better local minimum, and it doesn't stop there; we can see that it tries to go further only to come back due to steep curvature ahead. Even though the gradient amplitude may be small in a region, since we take the unit direction of gradient to estimate the perpendicular direction, this perpendicular step is invariant to the current gradient amplitude. Hence, it can explore a region where the gradient is more substantial and use it to estimate an effective step length that may lead to further descent.

\begin{figure*}[ht]
\begin{center}
\begin{subfigure}[b]{\linewidth} 
  \includegraphics[width=.23\linewidth]{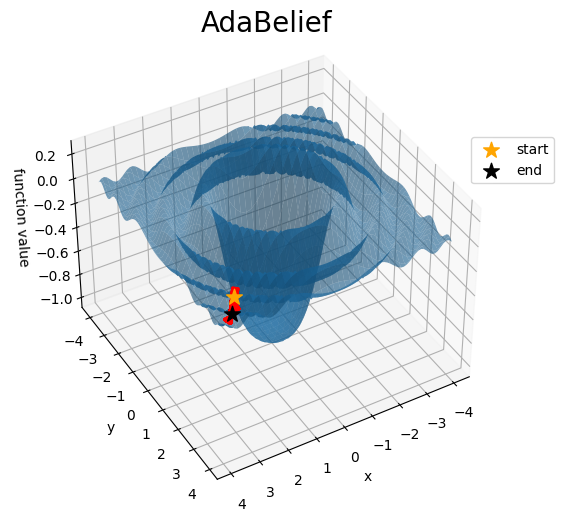}
  \includegraphics[width=.23\linewidth]{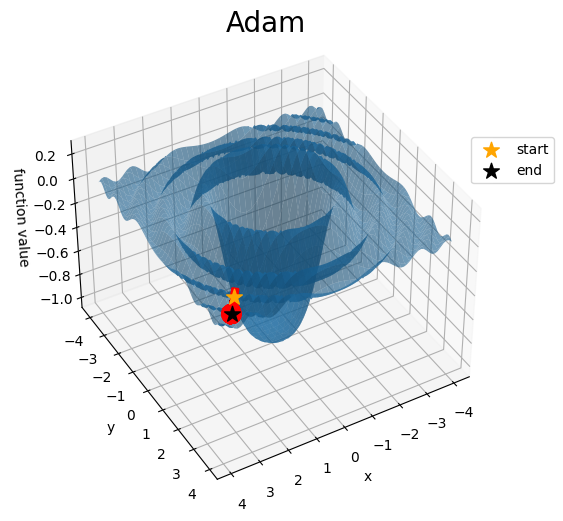}
  \includegraphics[width=.23\linewidth]{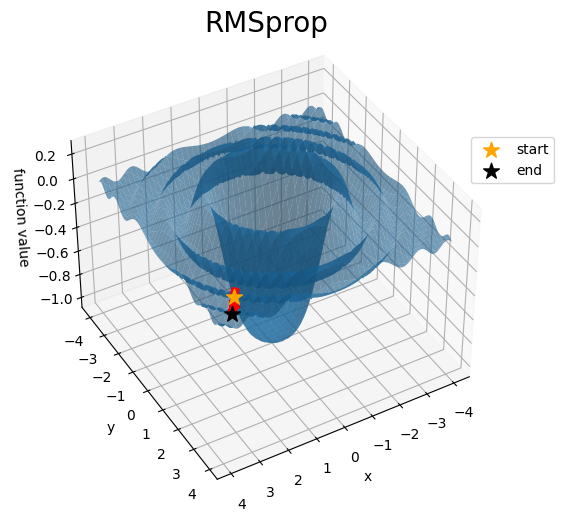}
  \includegraphics[width=.23\linewidth]{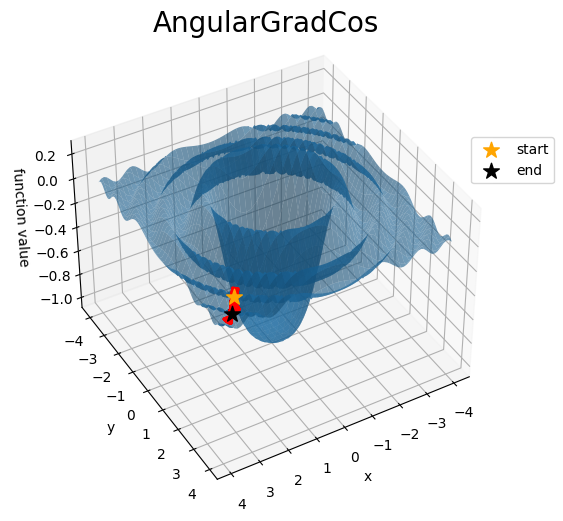}
  
  \includegraphics[width=.23\linewidth]{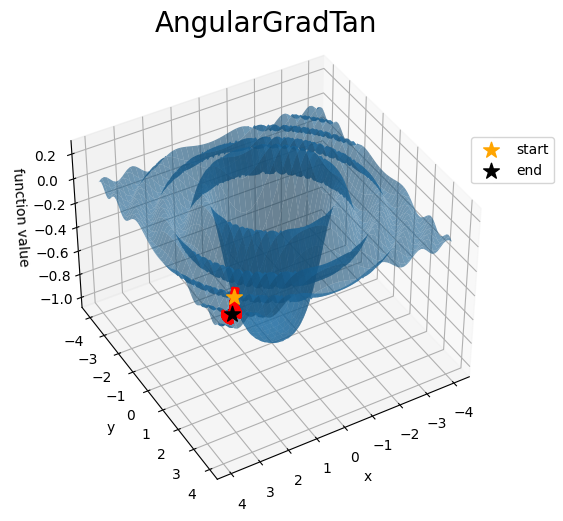}
  \includegraphics[width=.23\linewidth]{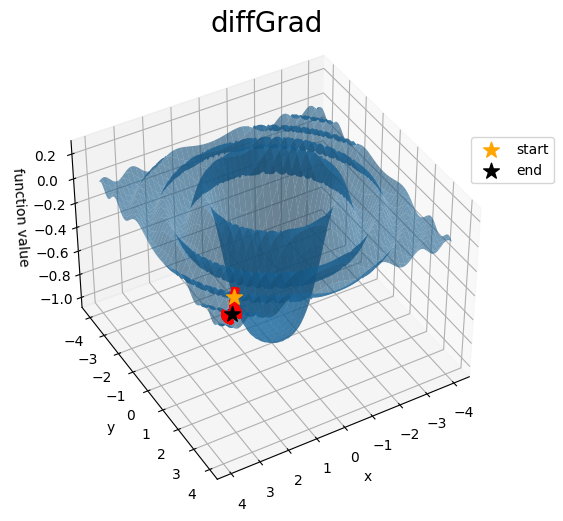}
  \includegraphics[width=.23\linewidth]{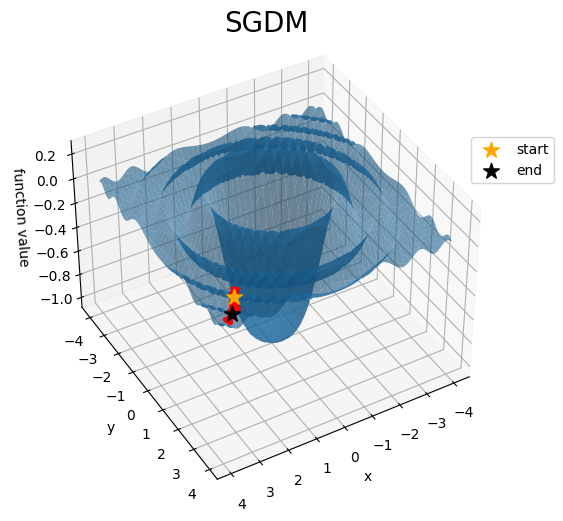}
  \includegraphics[width=.23\linewidth]{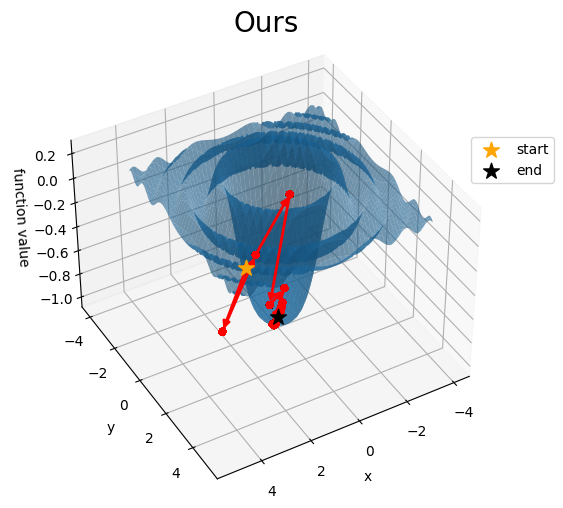}
  \caption{\label{subfigure:toy_example_2}Toy Example B: Here our method is shown in the bottom right. The black star denotes the terminal point, while the yellow star denotes the starting point. We observe that while all other methods cannot explore efficiently, our method reaches a better local minima. }
  \end{subfigure}
\end{center}
\end{figure*}

\subsection{Toy Example $B$}
The toy example $B$ is defined as follows 
\begin{align}
f(x,y) = -\frac{\sin{x^2 + y^2}}{x^2+y^2}.
\label{eqn:toyB}
\end{align}
The shape of $f(x,y)$ as shown in Figure \ref{subfigure:toy_example_2} is similar to a ripple in the water, the initialization point is kept at $(3.0,3.0).$ We observe that other optimizers get stuck within the ripple of the function, while ours can explore further and eventually find a better minimum. 

\normalsize

\begin{figure*} [ht]
  \centering
  \includegraphics[width=.24\linewidth]{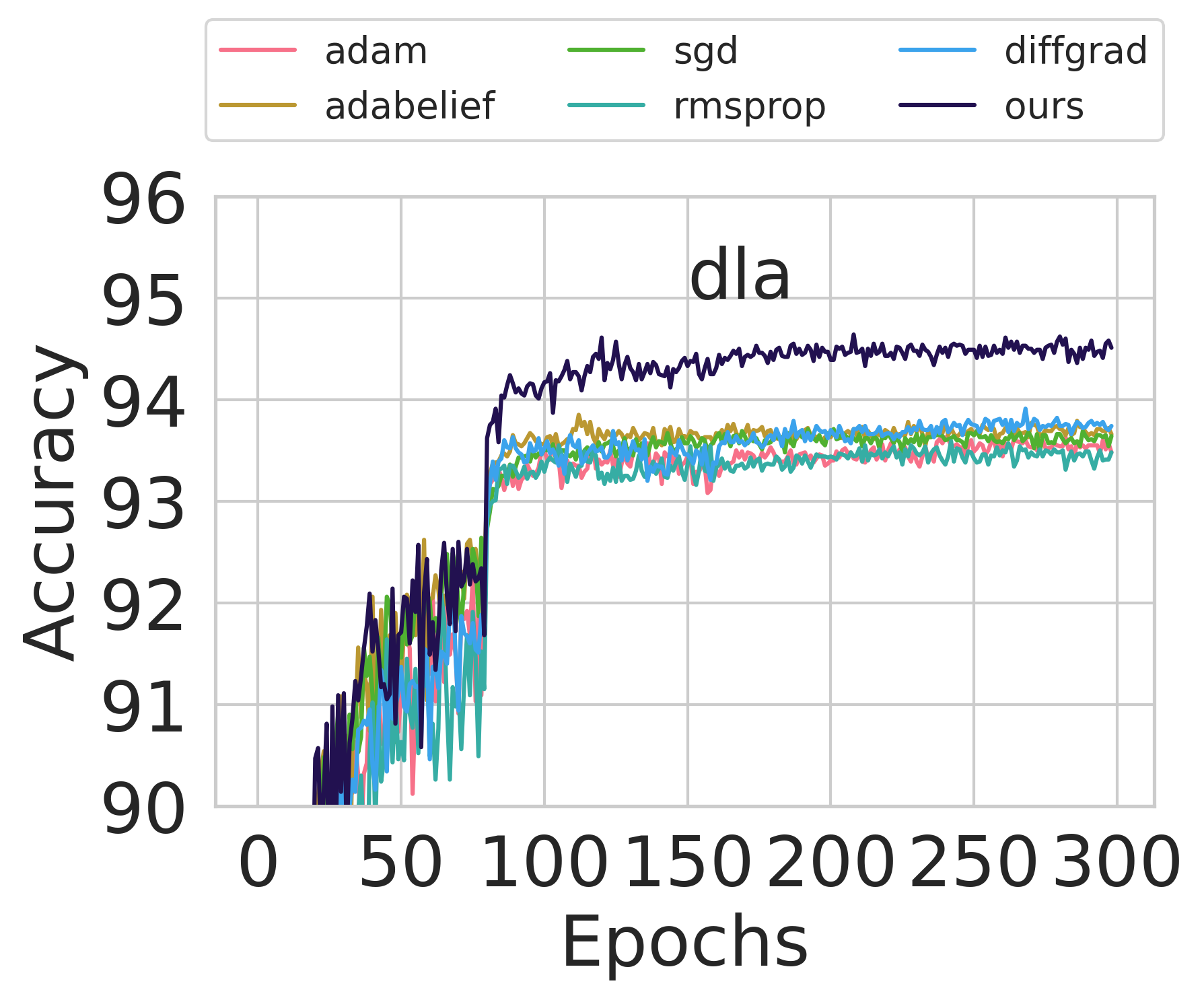}
  \includegraphics[width=.24\linewidth]{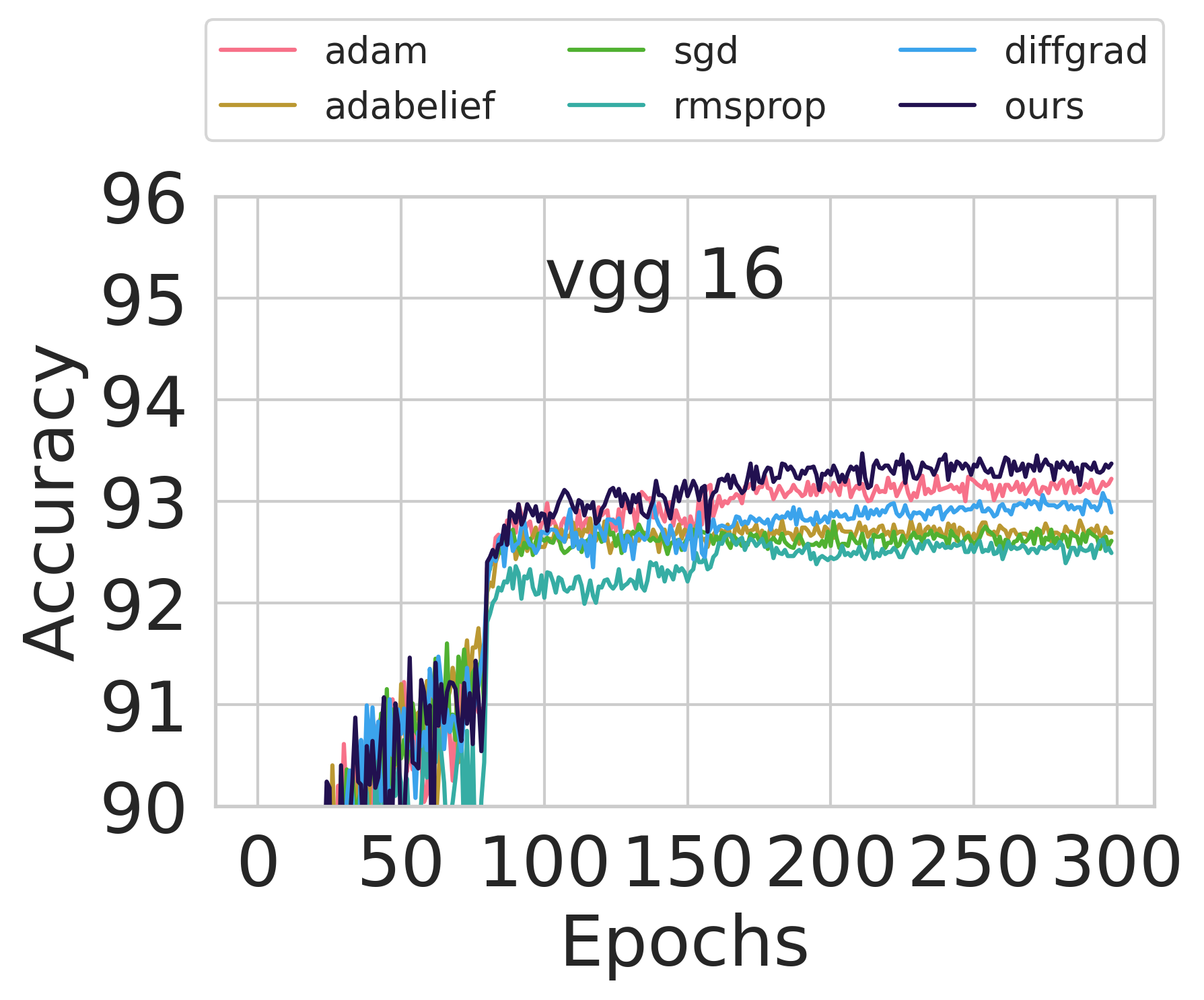}
  \includegraphics[width=.24\linewidth]{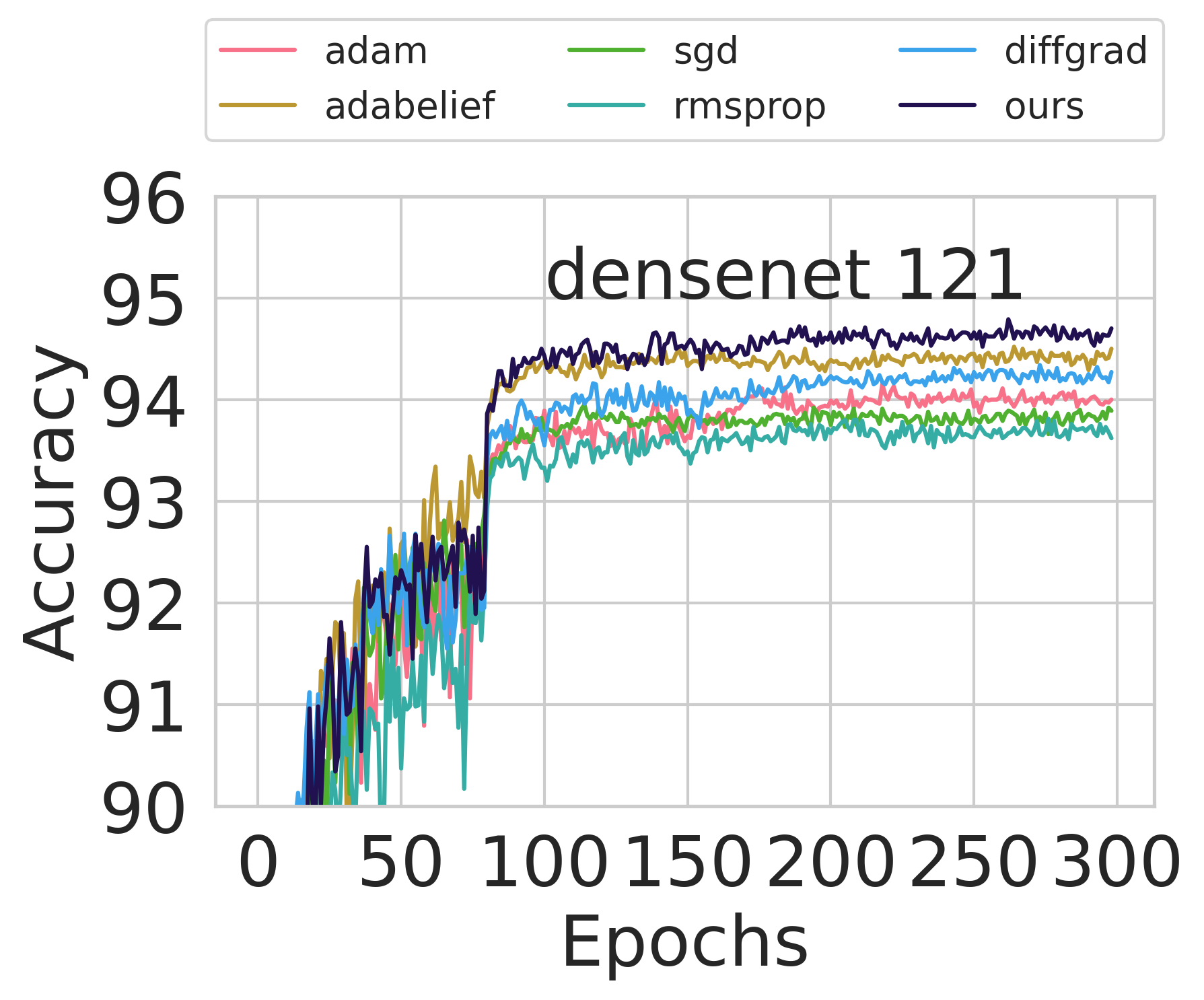}
  \includegraphics[width=.24\linewidth]{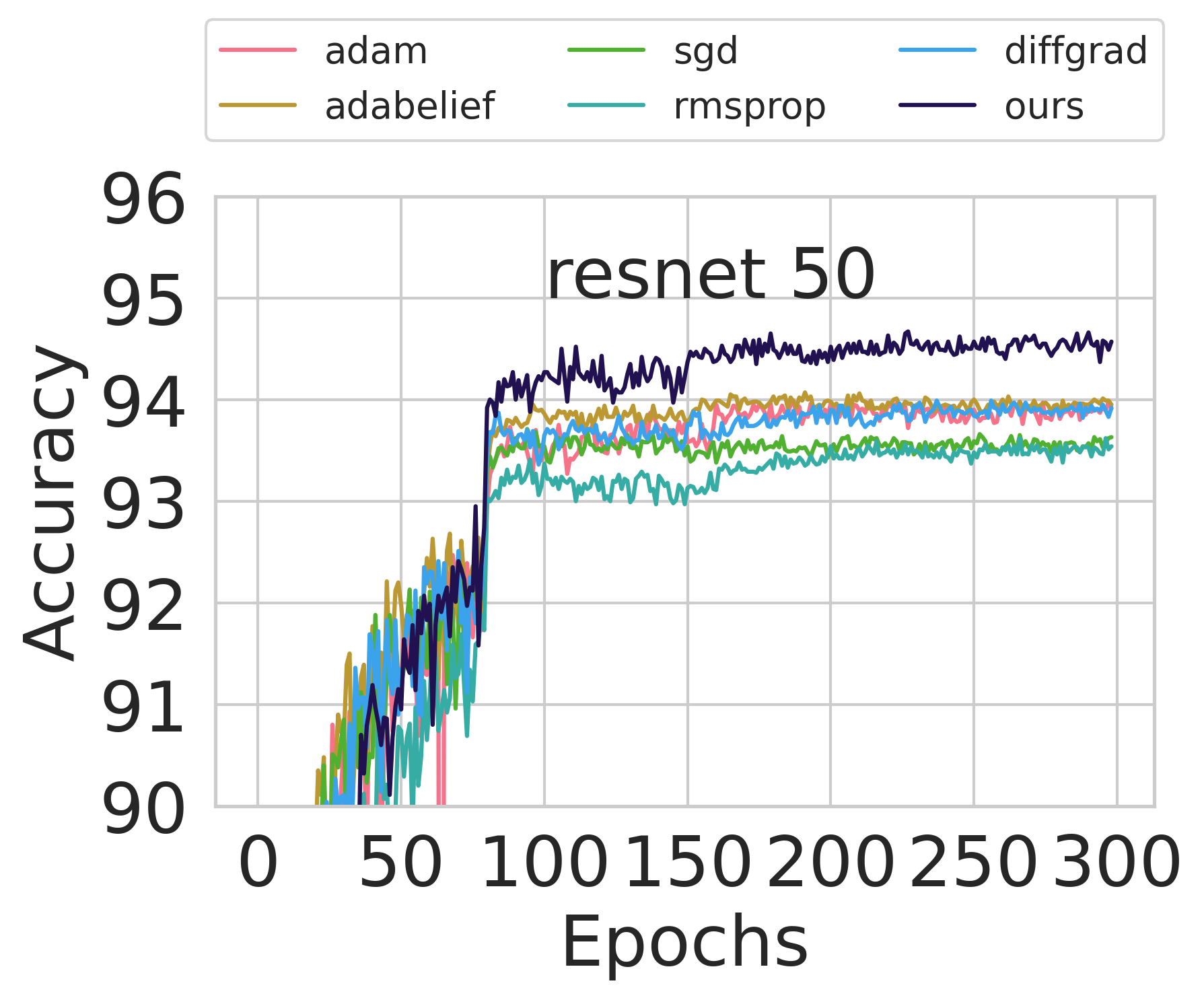}
  \caption{Accuracy versus epochs plot for image classification for CIFAR-100 results for 300 epochs. We verify that on long term our method maintains high accuracy, and most methods start to stagnate from epochs 100 onwards.}
  \label{fig:long_run}
\end{figure*}
\section{Convergence Results}
In this section, we prove convergence results. We show that with our choice of learning rate for SGD, our method converges. Moreover, later we also show that the Armijo's condition for our choice learning rate is satisfied.

\begin{theorem}
\label{th:1}
Let $f: \mathbb{R}^n  \rightarrow \mathbb{R}$  be a function which is convex and differentiable, and let the gradient of the function be $L$ Lipschitz continuous with Lipschitz constant $L > 0,$ i.e., we have that $\|\nabla f(x) - \nabla f(y)\| \le L \| x - y\| $ then the objective function value will be monotonously decreasing with each iteration of our method.
\end{theorem}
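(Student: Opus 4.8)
The plan is to recognize that a single iteration of Algorithm~\ref{alg:algorithm} is exactly one step of gradient descent taken along the normalized negative gradient with step length $d$. Since $g_1 = -\nabla f(X_1)$, the update $X \gets X_1 + d\, g_1/\|g_1\|$ rewrites as $X_{\mathrm{new}} = X_1 - \frac{d}{\|\nabla f(X_1)\|}\nabla f(X_1)$, i.e.\ plain gradient descent with effective learning rate $\eta = d/\|\nabla f(X_1)\|$. Thus the direction is a genuine descent direction whenever $\nabla f(X_1)\neq 0$, and the entire question reduces to showing the step length $d$ is small enough.

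First I would invoke the standard descent lemma, which follows solely from $L$-Lipschitz continuity of the gradient: for any $x,y$ one has $f(y)\le f(x)+\langle\nabla f(x),y-x\rangle+\tfrac{L}{2}\|y-x\|^2$. Substituting $y-x = -d\,\nabla f(X_1)/\|\nabla f(X_1)\|$ collapses the inner-product term to $-d\|\nabla f(X_1)\|$ and the quadratic term to $\frac{L}{2}d^2$, yielding
\begin{equation}
f(X_{\mathrm{new}}) \le f(X_1) + d\Bigl(\tfrac{L}{2}\,d - \|\nabla f(X_1)\|\Bigr).
\end{equation}
Hence $f(X_{\mathrm{new}})<f(X_1)$ as soon as $0<d<2\|\nabla f(X_1)\|/L$, which is the inequality I must certify for the step $d$ that the algorithm actually produces.

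The crux is therefore to upper-bound $d$, and here I would exploit two structural facts of the method. The angle is clamped by $\theta = \angle(g_1,g_2)+\epsilon \ge \epsilon$, so $\cot\theta \le \cot\epsilon$ on $(0,\pi/2]$; this is precisely the role of the $\epsilon$ offset, preventing $\cot\theta$ from blowing up for nearly parallel probing gradients. Accounting for the conditional doubling in steps~10--11, the realized step obeys $d \le 2h\cot\epsilon$, so monotone decrease holds provided $2h\cot\epsilon \le 2\|\nabla f(X_1)\|/L$, i.e.\ $h \le \|\nabla f(X_1)\|\tan\epsilon / L$ --- the "certain assumption" on the perpendicular step $h$ relative to the curvature, matching the paper's insistence that $h$ be kept small. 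Convexity then enters to keep the geometry well posed: gradient monotonicity, $\langle \nabla f(X_1)-\nabla f(X_2),\, X_1-X_2\rangle\ge 0$, prevents the two probing gradients from opening past a right angle, so that $\cot\theta>0$ and $d>0$ and the step is a genuine forward move rather than a reversal.

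The main obstacle I anticipate is the control of $d$ rather than the descent lemma, which is routine. Specifically, the Lipschitz estimate $\|g_2-g_1\|\le Lh$ naturally produces a \emph{lower} bound on $d$ (small $h$ forces a small angle and hence a large $\cot\theta$), so the required upper bound cannot come from Lipschitz continuity alone; it must be extracted from the algorithmic safeguards, namely the $\epsilon$-clamp together with the smallness assumption on $h$ tied to the current gradient norm. Making this quantitative, and in particular handling the doubling branch so that it cannot push $d$ past the threshold $2\|\nabla f(X_1)\|/L$, is where the careful bookkeeping lies, whereas verifying the resulting inequality in the descent lemma is immediate.
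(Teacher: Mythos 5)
Your core argument is the same as the paper's: apply the descent lemma $f(y)\le f(x)+\nabla f(x)^{T}(y-x)+\tfrac{L}{2}\|y-x\|^{2}$ to the normalized-gradient update, collapse it to $f(X_{\mathrm{new}})\le f(X_1)+d\bigl(\tfrac{L}{2}d-\|\nabla f(X_1)\|\bigr)$, and then keep $d$ below the critical threshold. Where you differ is in how that threshold is secured. The paper simply \emph{assumes} the constraint $h_{t}\cot\theta_{t}\le\|\nabla f(x_{t})\|/L$ (its equation \eqref{eq:h_constrain}, where $\|f(x_t)\|$ is evidently a typo for $\|\nabla f(x_t)\|$), which gives $d\le\|\nabla f\|/L$ and the stronger quantitative decrease $f(x_t)\le f(x_{t-1})-\tfrac{1}{2L}\|\nabla f(x_{t-1})\|^{2}$, while silently ignoring the doubling branch of steps 10--11. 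You instead derive an admissible range for $h$ from the $\epsilon$-clamp ($\cot\theta\le\cot\epsilon$) and explicitly account for the doubling, which is more faithful to Algorithm \ref{alg:algorithm} as written. One quantitative slip: at your boundary $h=\|\nabla f(X_1)\|\tan\epsilon/L$ the descent lemma yields only $f(X_{\mathrm{new}})\le f(X_1)$, not strict decrease, so you need a strict inequality (or the paper's extra factor of $\tfrac12$) to actually conclude monotone decrease.

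The one step that does not go through as stated is the claim that convexity forces $\theta\le\pi/2$ and hence $\cot\theta>0$. Gradient monotonicity applied to $X_1-X_2=hp_1$ with $p_1\perp\nabla f(X_1)$ yields only $\langle\nabla f(X_2),p_1\rangle\le 0$; that controls the component of $g_2$ along the probing direction $p_1$ and says nothing about the sign of $\langle g_1,g_2\rangle$, which is what determines whether $\theta$ is acute. The correct source of acuteness is the Lipschitz bound you already invoke elsewhere: $\|g_2-g_1\|\le Lh$, so whenever $Lh<\|\nabla f(X_1)\|$ one gets $\langle g_1,g_2\rangle\ge\|g_1\|^{2}-\|g_1\|\,\|g_2-g_1\|>0$, hence $\theta<\pi/2$ and $d>0$. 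With that substitution, and strictness at the threshold, your plan closes; as written, the appeal to convexity is the one link that would fail.
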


\begin{proof}
The below equation gives the update rule of our method
\small 
\begin{align}
\label{eq:update_rule}
    y = x_{t} = x_{t-1} - h_{t-1}\cot{\theta_{t-1}}\frac{\nabla f(x_{t-1})}{\|\nabla f(x_{t-1})\|}.
\end{align}
\normalsize
\text{And, let $h$ be constrained as,}

\small 
\begin{align}
    \label{eq:h_constrain}
    h_{t}\le\frac{\|f(x_{t})\|}{ L\cot{\theta_{t}}}.
\end{align}
\normalsize
With assumptions on $f(x),$ we have
\small 
\begin{align*}
f(y) &\le f(x) + \nabla f(x)^{T}(y-x) + \frac{1}{2}\nabla^2f(x)\|y-x\|^2. 
\end{align*}
\normalsize
\text{Since $\nabla f(x)$ is $L$-Lipschitz continuous,} $\nabla^2 f(x) \le L$
\small 
\begin{align*}
f(y) &\le f(x) + \nabla f(x)^{T}(y-x) + \frac{1}{2}L\|y-x\|^2. 
\end{align*}
\normalsize
Substituting $y$ as in equation \ref{eq:update_rule}, and $x$ as $x_{t-1}$, we get:
\small 
\begin{align*}
    f(x_{t}) &\le f(x_{t-1}) + \nabla f(x_{t-1})^{T}(- h_{t-1}\cot{\theta_{t-1}}\frac{\nabla f(x_{t-1})}{\|\nabla f(x_{t-1})\|}) \\
    &+ \frac{1}{2}L\|- h_{t-1}\cot{\theta_{t-1}}\frac{\nabla f(x_{t-1})}{\|\nabla f(x_{t-1})\|}\|^2 )\\
     &\le f(x_{t-1}) + \nabla f(x_{t-1})^{T}(- h_{t-1}\cot{\theta_{t-1}}\frac{\nabla f(x_{t-1})}{\|\nabla f(x_{t-1})\|}) \\
     &+ \frac{1}{2}L h_{t-1}^2\cot^2{\theta_{t-1}}) \\
     &\le f(x_{t-1}) + h_{t-1}\cot{\theta_{t-1}}(\frac{L}{2} h_{t-1}\cot{\theta_{t-1}}
     -\|\nabla f(x_{t-1})\|). 
\end{align*}
\normalsize

\text{Using equation \ref{eq:h_constrain}, we get}
\small 
\begin{align*}
    f(x_{t}) &\le f(x_{t-1}) + \frac{\|\nabla f(x_{t-1})\|}{L}( \frac{\|\nabla f(x_{t-1})\|}{2} 
    - \|\nabla f(x_{t-1})\|)  \\
     &\le f(x_{t-1}) - \frac{1}{2L}(\|\nabla f(x_{t-1})\|^2).  
\end{align*}
\normalsize
Hence the objective function decreases with every iterate.
\end{proof}
A well-known condition on the step length for sufficient decrease in function value is given by Wolfe's condition as stated below. We will show that the first Wolfe's condition, which is Armijo's condition is satisfied, however, we show that second Wolfe's condition is not satisfied. 
\begin{theorem}[{\bf Wolfe's Condition}]
Let $\alpha_k$ be the step length and $p_k$ be the descent direction for minimizing the function $f(x): \mathbb{R}^n \rightarrow \mathbb{R}.$ Then the strong Wolfe's condition require $\alpha_k$ to satisfy the following two conditions: 
\begin{align}
    f(x_k + \alpha_k p_k) \leq f(x_k) + c_1 \alpha_k \nabla f_k^T p_k. \label{eqn:wolfe1} \\ 
    |\nabla f(x_k + \alpha_k p_k)^T p_k| \leq c_2 |\nabla f_k^T p_k|, \label{eqn:wolfe2}
\end{align}
with $0<c_1 <c_2 <1.$
\end{theorem}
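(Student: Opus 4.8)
The plan is to read this statement not as a bare definition but as the classical existence result attached to the (strong) Wolfe conditions: under the standing assumptions that $f$ is continuously differentiable, that $p_k$ is a genuine descent direction so that $\nabla f_k^T p_k < 0$, and that $f$ is bounded below along the ray $\{x_k + \alpha p_k : \alpha > 0\}$, I would show there is a nonempty set of step lengths $\alpha_k > 0$ that satisfy both (\ref{eqn:wolfe1}) and (\ref{eqn:wolfe2}) simultaneously. The first move is to collapse the problem to one dimension by introducing the restriction $\phi(\alpha) = f(x_k + \alpha p_k)$, whose derivative is $\phi'(\alpha) = \nabla f(x_k + \alpha p_k)^T p_k$ and which satisfies $\phi'(0) < 0$.

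First I would locate the sufficient-decrease threshold. The Armijo line $\ell(\alpha) = f(x_k) + c_1 \alpha\, \nabla f_k^T p_k$ has a strictly negative slope because $0 < c_1 < 1$ and $\nabla f_k^T p_k < 0$, so it decreases without bound, while $\phi$ stays bounded below. Hence $\ell$ must intersect the graph of $\phi$, and I would take the smallest positive crossing point $\alpha'$ with $\phi(\alpha') = \ell(\alpha')$. On the whole interval $(0, \alpha')$ the graph of $\phi$ lies below $\ell$, so the sufficient-decrease inequality (\ref{eqn:wolfe1}) already holds there.

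Next I would extract the curvature condition by applying the mean value theorem to $\phi$ on $[0, \alpha']$: there is an interior point $\alpha'' \in (0, \alpha')$ with $\phi(\alpha') - \phi(0) = \alpha' \phi'(\alpha'')$. Substituting the defining equality of $\alpha'$ gives $\nabla f(x_k + \alpha'' p_k)^T p_k = c_1\, \nabla f_k^T p_k$. Because $\nabla f_k^T p_k$ is negative and $c_1 < c_2$, passing to absolute values yields $|\nabla f(x_k + \alpha'' p_k)^T p_k| = c_1 |\nabla f_k^T p_k| < c_2 |\nabla f_k^T p_k|$, which is exactly the strong curvature bound (\ref{eqn:wolfe2}). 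Since $\alpha'' < \alpha'$, the Armijo condition also holds at $\alpha''$, so this one step length meets both requirements; a short continuity argument around $\alpha''$ then upgrades the single point to an open interval of admissible step lengths.

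The main obstacle is not any single computation but the careful bookkeeping that ties the three hypotheses together: boundedness below is what forces the Armijo line to overtake $\phi$ and so guarantees that a crossing $\alpha'$ exists, continuous differentiability is what licenses the mean value step, and the strict ordering $c_1 < c_2$ is exactly what turns the equality produced by the mean value theorem into the strict curvature inequality. I would take particular care that $\alpha'$ is chosen as the \emph{first} crossing, since this is what keeps the open interval $(0, \alpha')$ strictly below the line and places the mean-value point $\alpha''$ in a region where both conditions are simultaneously active.
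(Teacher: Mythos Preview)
Your proposal is correct and is precisely the classical existence argument for the Wolfe conditions. The paper, however, does not give any proof at all: its ``proof'' is simply a citation to \cite[p.~39]{nocedal2006}, i.e., Nocedal and Wright's textbook, where exactly the argument you outline (restrict to $\phi(\alpha)=f(x_k+\alpha p_k)$, use boundedness below to find the first crossing $\alpha'$ with the Armijo line, apply the mean value theorem on $[0,\alpha']$ to locate $\alpha''$, and compare $c_1$ with $c_2$) appears. So you have reconstructed the referenced proof rather than diverged from the paper's approach; the only difference is that the paper outsources the work while you carry it out. One minor remark: as literally written in the paper, the ``theorem'' is just the \emph{definition} of the strong Wolfe conditions, and you were right to read it as the companion existence result, since that is the only substantive claim on the cited page.
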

\begin{proof}
See \cite[p. 39]{nocedal2006} for details.
\end{proof}

\begin{theorem}[{\bf First Wolfe condition}]
Under the assumptions of theorem \ref{th:1}, the update rule in Algorithm \ref{alg:algorithm} satisfies the first Wolfe's sufficient decrease condition condition \eqref{eqn:wolfe1} with $c_1 = \frac{1}{2L}.$
\end{theorem}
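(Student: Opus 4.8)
The plan is to recognize the update rule \eqref{eq:update_rule} as a line search of the form $x_t = x_{t-1} + \alpha_{t-1} p_{t-1}$, read off its step length and search direction, and then reduce the Armijo inequality \eqref{eqn:wolfe1} to the per-iteration decrease already established in Theorem \ref{th:1}.

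First I would identify, from \eqref{eq:update_rule}, the unit descent direction $p_{t-1} = -\nabla f(x_{t-1})/\|\nabla f(x_{t-1})\|$ together with the scalar step length $\alpha_{t-1} = h_{t-1}\cot\theta_{t-1}$. A one-line computation gives $\nabla f(x_{t-1})^{T} p_{t-1} = -\|\nabla f(x_{t-1})\|$, which confirms that $p_{t-1}$ is a genuine descent direction (so the right-hand side of \eqref{eqn:wolfe1} lies below $f(x_{t-1})$) and pins down the exact Armijo target: with $c_1 = \tfrac{1}{2L}$ the condition to be verified becomes
\[
f(x_t) \le f(x_{t-1}) - \frac{1}{2L}\,h_{t-1}\cot\theta_{t-1}\,\|\nabla f(x_{t-1})\|.
\]

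Next I would invoke the decrease already proven in Theorem \ref{th:1}, namely $f(x_t) \le f(x_{t-1}) - \tfrac{1}{2L}\|\nabla f(x_{t-1})\|^{2}$. Comparing this guaranteed decrease with the Armijo target above, it suffices to show that the target decrease is no larger, i.e. $h_{t-1}\cot\theta_{t-1}\,\|\nabla f(x_{t-1})\| \le \|\nabla f(x_{t-1})\|^{2}$, equivalently $h_{t-1}\cot\theta_{t-1} \le \|\nabla f(x_{t-1})\|$. This is exactly where the step constraint \eqref{eq:h_constrain} enters: it furnishes $h_{t-1}\cot\theta_{t-1} \le \|\nabla f(x_{t-1})\|/L$, and chaining these inequalities closes the argument, yielding \eqref{eqn:wolfe1} with $c_1 = \tfrac{1}{2L}$.

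The main obstacle is the coefficient matching in the final step: the constant $c_1 = \tfrac{1}{2L}$ cannot be obtained by direct substitution, because $\alpha_{t-1} = h_{t-1}\cot\theta_{t-1}$ is generally not equal to $\|\nabla f(x_{t-1})\|$, so the line-search decrease and the Theorem \ref{th:1} decrease differ by the factor $h_{t-1}\cot\theta_{t-1}/\|\nabla f(x_{t-1})\|$. The constraint \eqref{eq:h_constrain} controls this factor, but passing from $h_{t-1}\cot\theta_{t-1} \le \|\nabla f(x_{t-1})\|/L$ to $h_{t-1}\cot\theta_{t-1} \le \|\nabla f(x_{t-1})\|$ implicitly requires $L \ge 1$. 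I would flag this explicitly and, if needed, either adopt the mild normalization $L \ge 1$ (always available by enlarging the Lipschitz constant) or instead start from the sharper intermediate estimate $f(x_t) \le f(x_{t-1}) - \tfrac{1}{2} h_{t-1}\cot\theta_{t-1}\|\nabla f(x_{t-1})\|$ that appears inside the proof of Theorem \ref{th:1} after applying \eqref{eq:h_constrain}; that bound already delivers \eqref{eqn:wolfe1} with $c_1 = \tfrac{1}{2}$, hence with any $c_1 \le \tfrac{1}{2}$, which then subsumes the claimed $c_1 = \tfrac{1}{2L}$ whenever $L \ge 1$.
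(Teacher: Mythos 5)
Your proposal follows the same basic strategy as the paper --- reduce the Armijo inequality to the per-iteration decrease of Theorem \ref{th:1} --- but you carry it out more carefully, and in doing so you expose a real gap that the paper's own proof glosses over. The paper declares $p_k = -\nabla f(x_k)$ and $\alpha = h\cot\theta$, yet its final display verifies only $f(x_t) \le f(x_{t-1}) + \tfrac{1}{2L}\nabla f(x_{t-1})^T p_{t-1}$: the step length $\alpha_{t-1}$ has silently disappeared from the right-hand side of \eqref{eqn:wolfe1}, and the chosen $p_k$ does not even reproduce the update \eqref{eq:update_rule} with that $\alpha$. You instead read off the consistent pair $p_{t-1} = -\nabla f(x_{t-1})/\|\nabla f(x_{t-1})\|$, $\alpha_{t-1} = h_{t-1}\cot\theta_{t-1}$, keep the factor $\alpha_{t-1}$ in the Armijo target, and correctly find that matching $c_1 = \tfrac{1}{2L}$ then requires $h_{t-1}\cot\theta_{t-1} \le \|\nabla f(x_{t-1})\|$, i.e.\ $L \ge 1$ on top of \eqref{eq:h_constrain}. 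Your alternative route --- stopping at the intermediate bound $f(x_t) \le f(x_{t-1}) - \tfrac{1}{2}h_{t-1}\cot\theta_{t-1}\|\nabla f(x_{t-1})\|$, which is exactly \eqref{eqn:wolfe1} with $c_1 = \tfrac{1}{2}$ --- is the cleaner statement and is arguably what the theorem should assert; the advertised $c_1 = \tfrac{1}{2L}$ then follows only when $L \ge 1$. One caution on your first fix: ``enlarging the Lipschitz constant'' to force $L \ge 1$ is not free here, because \eqref{eq:h_constrain} becomes more restrictive as $L$ grows, so the hypothesis under which Theorem \ref{th:1} was proved changes with it; the $c_1 = \tfrac{1}{2}$ formulation avoids this entirely and is the version I would keep.
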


\begin{proof}
We recall that in Wolfe's conditions, $p_k$ and $\alpha$ are the descent direction and step size, respectively. For our method in Algorithm \ref{alg:algorithm}, $p_k = -\nabla f(x_k) $ and $\alpha = h\cot{\theta}.$ From theorem \ref{th:1} we have the expression,
\small
\begin{align*}
    f(x_{t}) &\le f(x_{t-1}) - \frac{1}{2L}(\|\nabla f(x_{t-1})\|^2)  \\
    &\le f(x_{t-1}) + \frac{1}{2L}\nabla f(x_{t-1})^Tp_{t-1}.
\end{align*}
\normalsize
Hence, for $c_1 = \frac{1}{2L}$ the above inequality holds true.
\end{proof}

\begin{table}[ht]
\begin{center}
\begin{tabular}{|l |c |c |c |c|} 
    \hline
       Optimizer & EN-B0 & EN-B0 wide & EN-B4 & EN-B4 wide  \\
       \hline
        SGD& \underline{59.53} & \underline{59.99} & \textbf{61.45} & \textbf{61.76} \\
        ADAM& 57.41 & 55.86 & 56.20 & 54.57 \\
        RMSPROP& 57.98 & 56.52 & 57.00 & 55.77 \\
        DIFFGRAD& 56.79 & 56.12 & 59.03 & 58.59 \\ 
        AGC& 58.61 & 58.33 & 58.22 & 56.46 \\
        AGT& 58.70 & 58.08 & 57.97 & 56.62 \\
        OURS& \textbf{60.21} & \textbf{60.51} & \underline{61.32} & \underline{61.17} \\
        \hline
\end{tabular}
\end{center}
\caption{\label{tab:table3_miniimagenet}Results on mini-ImageNet dataset. Since ImageNet is a relatively large dataset, we have results on EfficientNet only. We observe that our method has the best accuracy for two variants shown in {\bf bold}, and the second best results for the other two variants shown as \underline{underlined}. Despite hyperparameter tuning, most other methods except vanilla SGD perform poorly.}
\end{table}

\section{Numerical results}

\begin{table}[ht]
   \begin{center}
     
     \begin{tabular}{|l|c|c|c|c|c|c|} 
     \hline
       Method & RN18 & RN34 & RN50 & D121 & VGG16 &  DLA \\
       \hline 
       SGD       & 93.18             & 93.63             & 93.40     & 93.85             & 92.57          & 93.29   \\
       Adam      & 93.85             & 93.99             & 93.88    & 94.28             & 92.66          & 93.57   \\
       RMSProp   & 93.63             & \underline{94.07} & 93.42             & 93.84          & 92.36  & 93.29   \\
       AdaBelief  & 93.57             & 93.71             & 93.80    & 94.26             & \underline{92.84}          & 93.72 \\
       diffGrad  & 93.85             & 93.73             & 93.65    & 94.28             & 92.67          & 93.49  \\
       AGC       & 93.64             & 94.02             & 94.05    & \underline{94.57}             & 92.64          & 93.6  \\
       AGT       & \underline{93.92} & 93.89             & \underline{94.11} & 94.50          & 92.76 & \underline{93.74}  \\
       OURS      & \textbf{94.00 } & \textbf{94.24 }     & \textbf{94.39 }   & \textbf{94.75 } & \textbf{93.18} & \textbf{94.38}  \\ 
    \hline
     \end{tabular}
   \end{center}
   \caption{\label{table1_cifar10_results}Overall accuracies on CIFAR-10 first 100 epochs. The best results are in {\bf bold}, and second best is \underline{underlined}. We find that our method has the best accuracy among all the methods compared. Here RN stands for ResNet, D121 is DenseNet-121, and DLA is Deep Layer Aggregation.}
 \end{table}
 
We test the performance of the current state-of-the-art optimizers with ours on CIFAR-10 \cite{CIFAR-DATASET}, CIFAR-100 \cite{CIFAR-DATASET}, and mini-imagenet \cite{NIPS2016_90e13578}. 

For CIFAR-10, and CIFAR-100, we take the prominent image classification architectures, these are ResNet18 \cite{he2016deep}, ResNet34 \cite{he2016deep}, ResNet50 \cite{he2016deep}, VGG-16 \cite{VGG-16}, DLA \cite{8578353}, and DenseNet121 \cite{DenseNet-121}, in the tables, we refer to these architectures as RN18, RN34, RN50, VGG16, DLA, and D121 respectively. The current state-of-the-art optimizers are being compared with our method, i.e, Stochastic Gradient Descent with Momentum (SGDM), Adam \cite{ADAM}, RMSprop, Adabelief, diffGrad, cosangulargrad (AGC), tanangulargrad (AGT). 

To see long term result and stagnation of accuracy for CIFAR-100, a relatively large dataset, we run for 300 epochs and show accuracy results in Figure \ref{fig:long_run} for ResNet50, VGG-16, DenseNet-121, and DLA. Our method performs substantially better on DLA, and achieves best accuracy for resnet-50 and DenseNet on CIFAR-100. 

\begin{figure*}[ht]
  \centering
  \includegraphics[width=.32\linewidth]{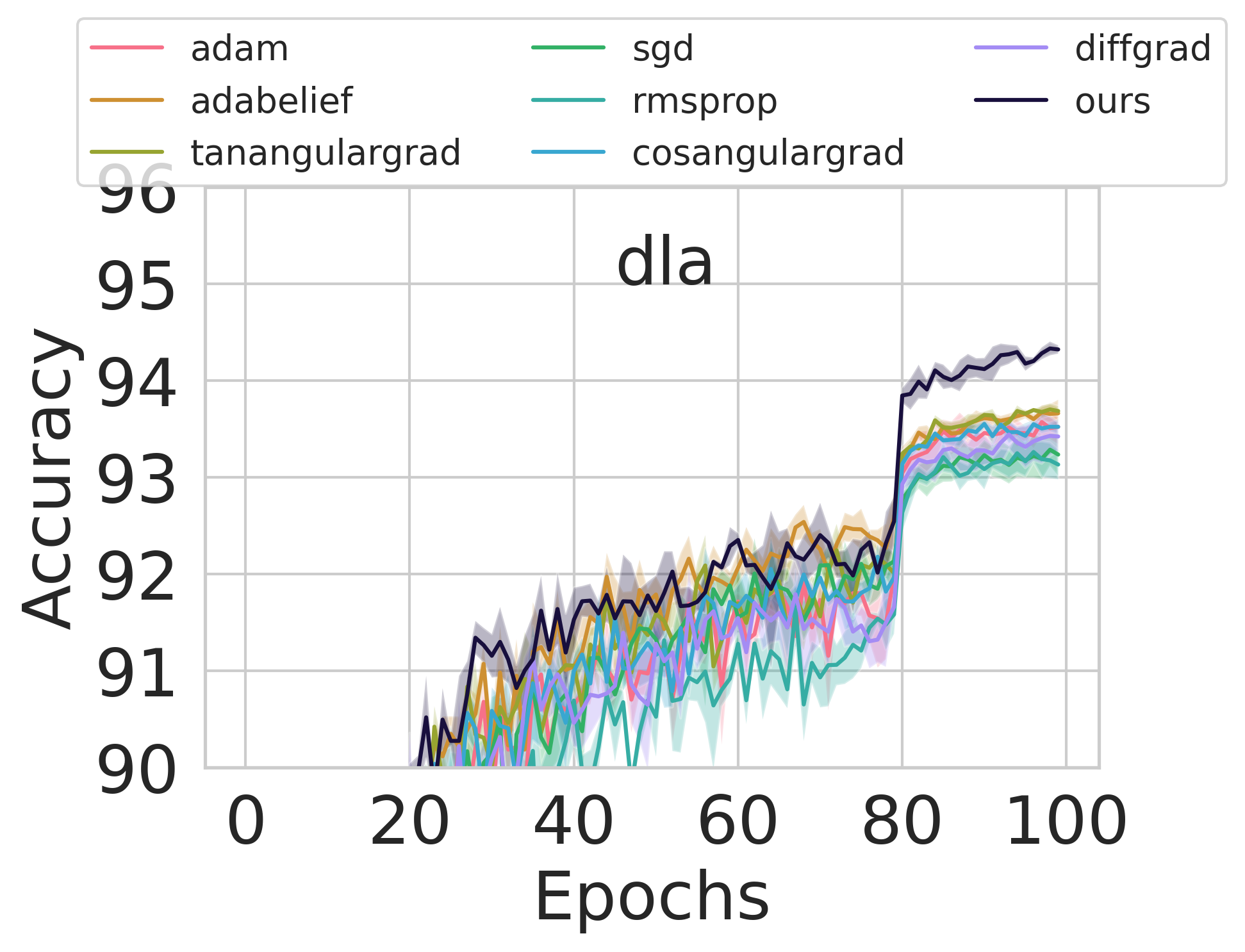}
  \includegraphics[width=.32\linewidth]{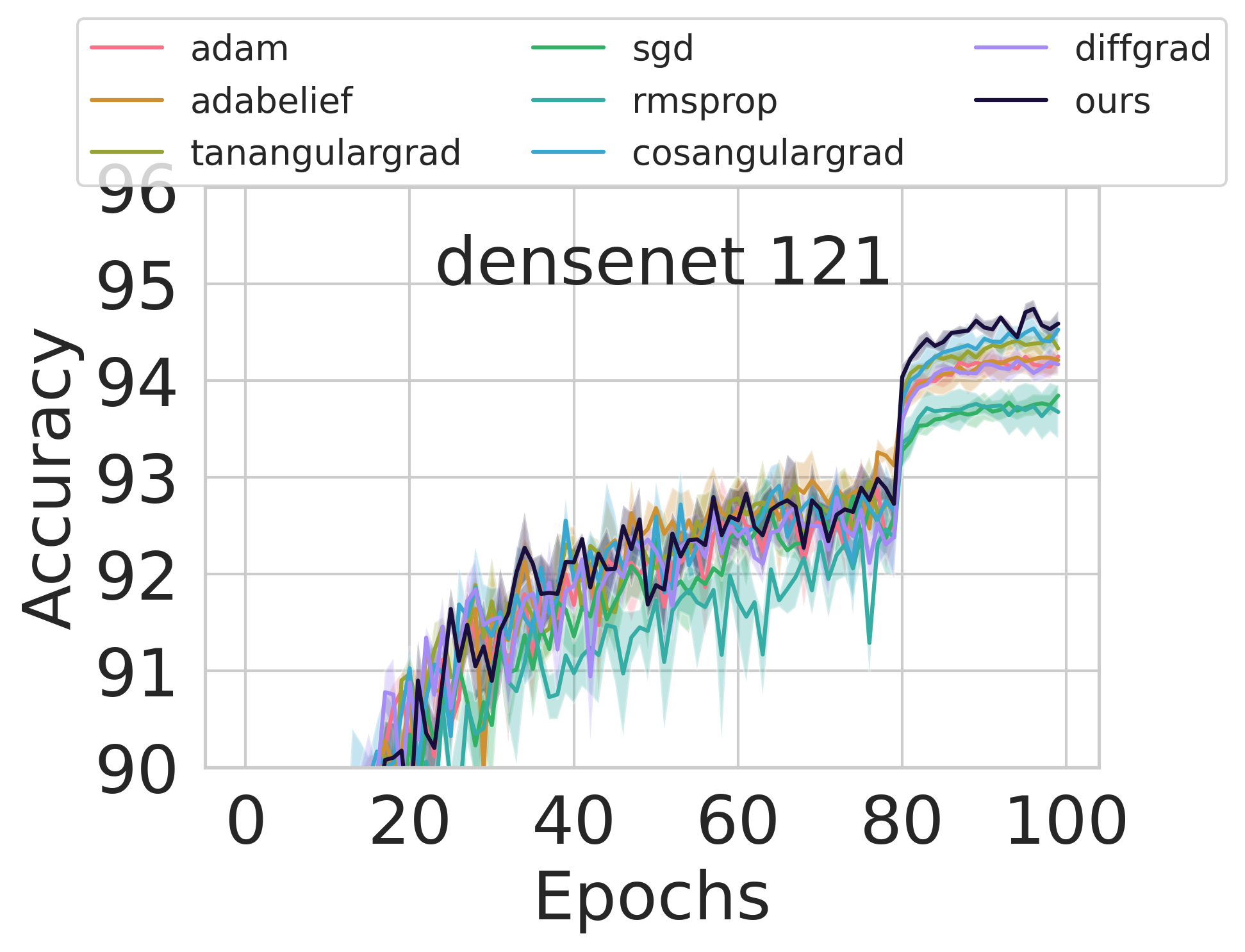}
  \includegraphics[width=.32\linewidth]{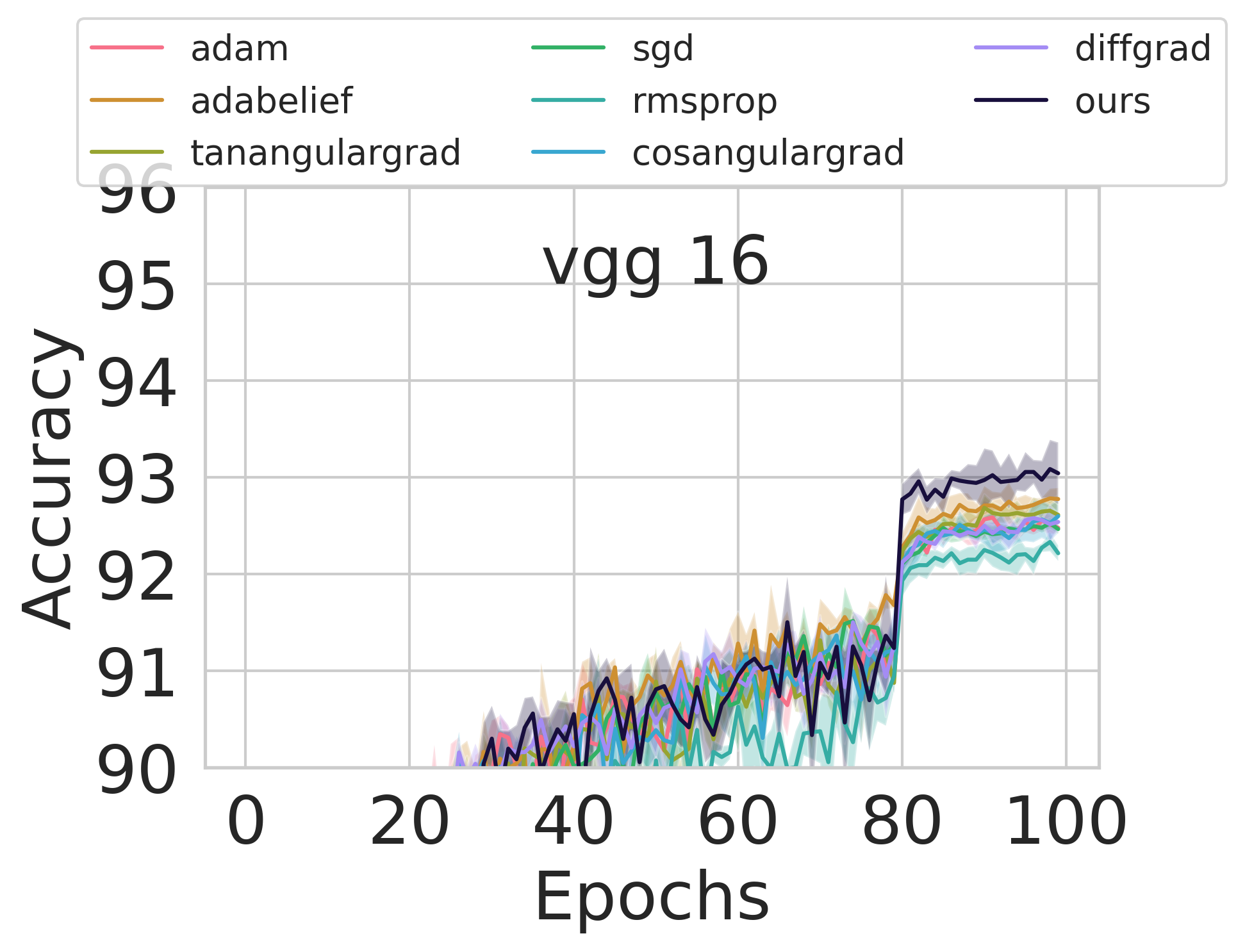}
  \includegraphics[width=.32\linewidth]{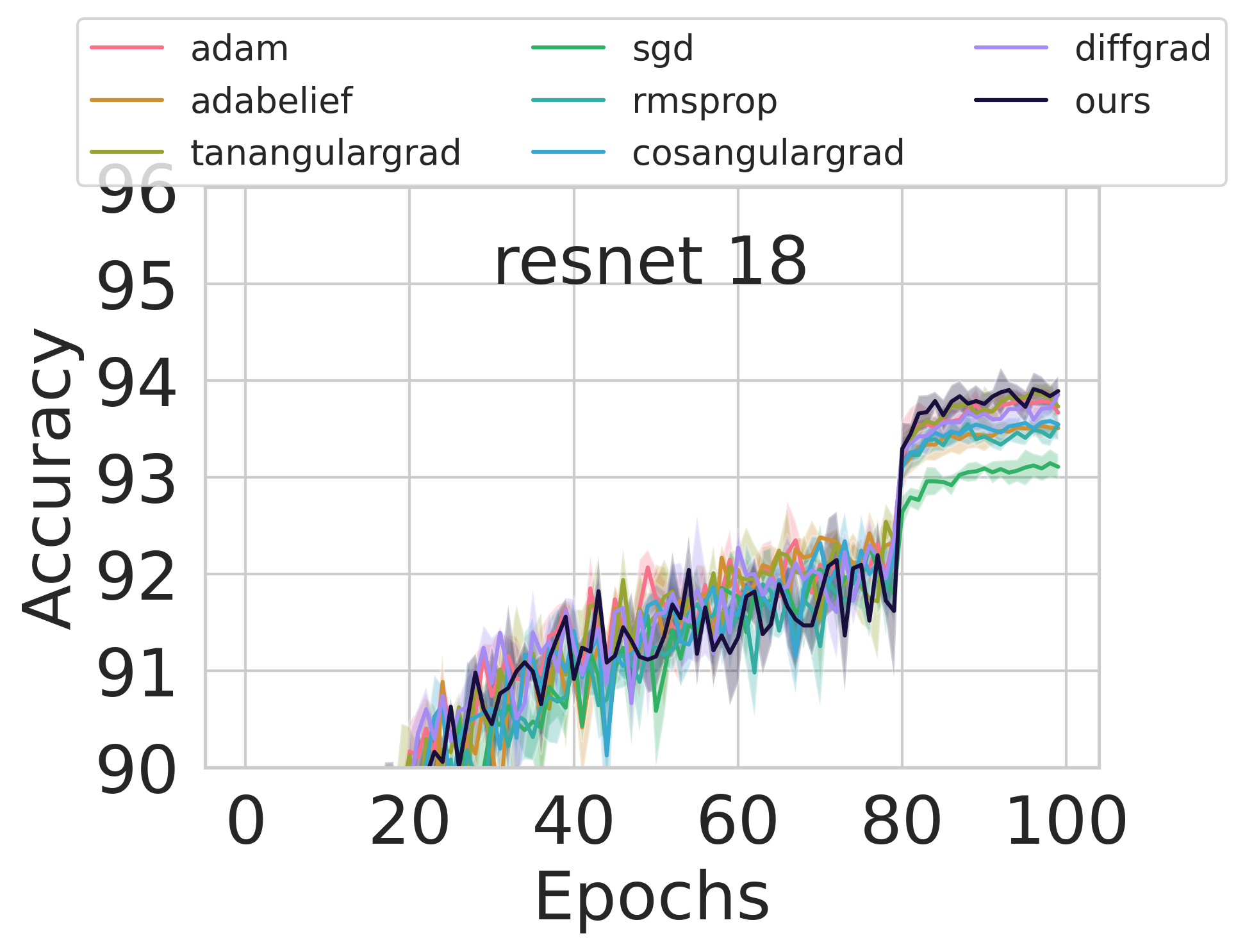}
  \includegraphics[width=.32\linewidth]{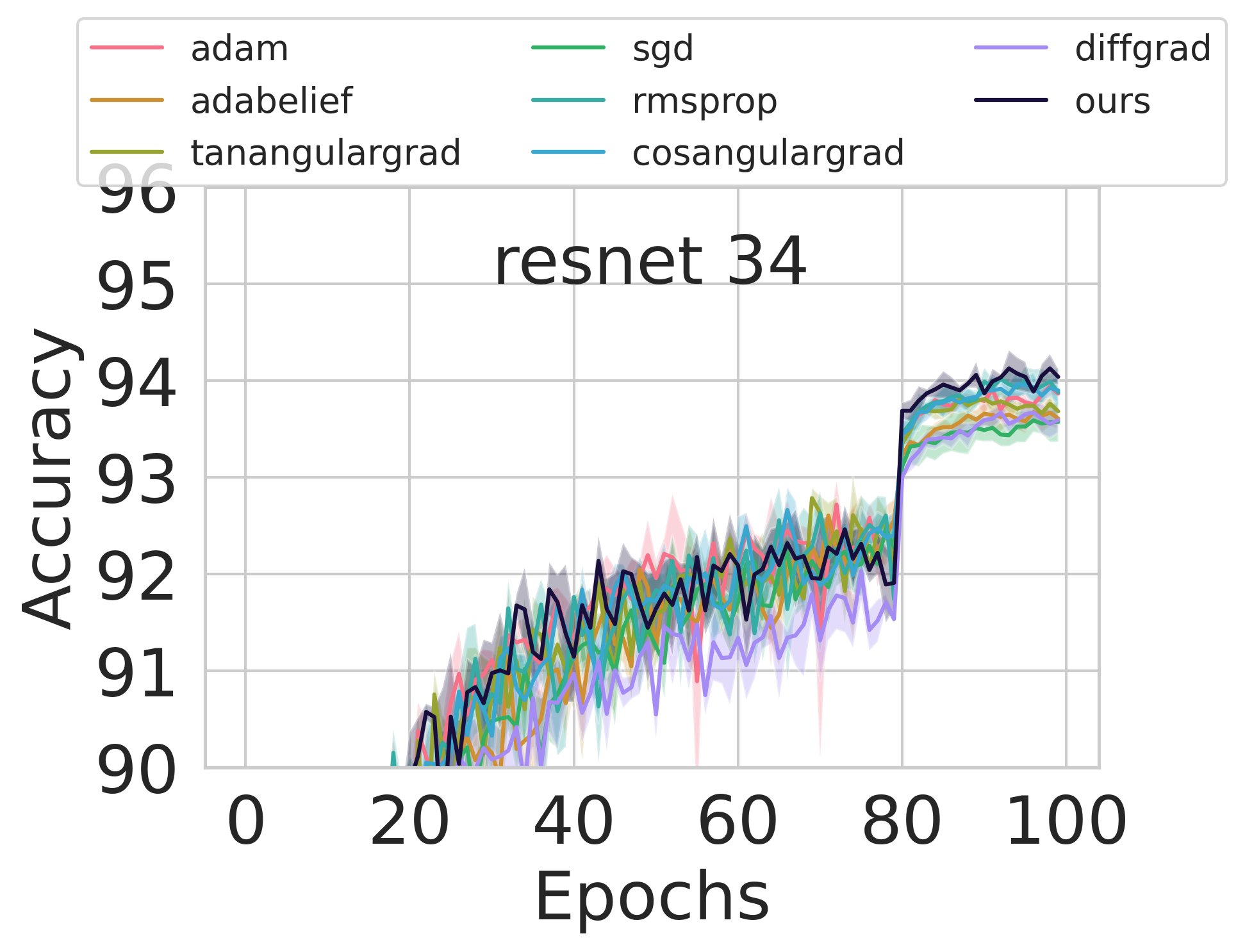}
  \includegraphics[width=.32\linewidth]{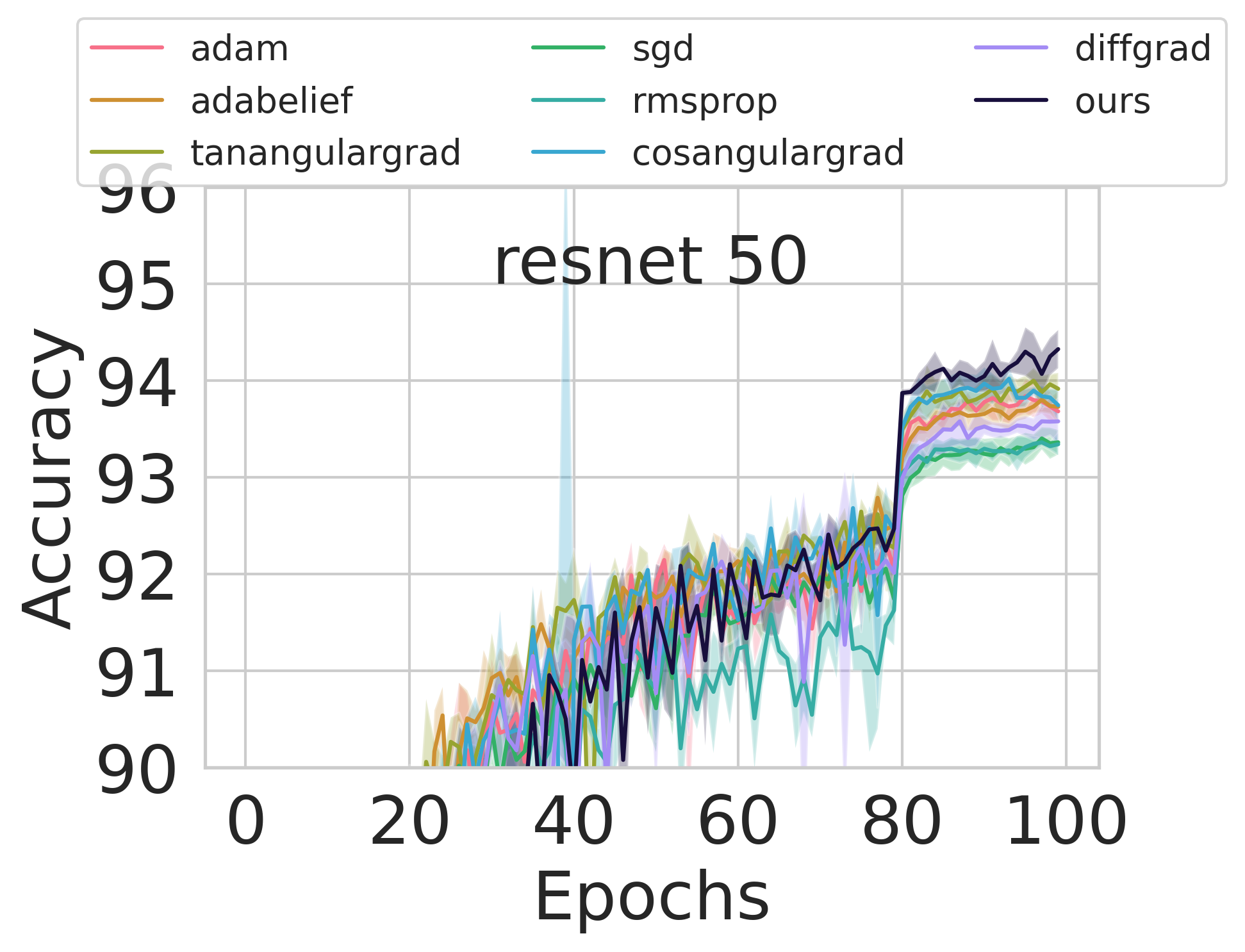}
  \caption{Accuracy plots versus epochs for image classification on the CIFAR-10 dataset. Our method is represented by dark purple curve, which is an average over 3 runs. The variance observed is not much around the mean. On DLA we observe a significantly better accuracy. On other architectures, our method maintains the highest accuracy.}
  \label{fig:cifar10 best results within 100 epochs with variance}
\end{figure*}%

Figure \ref{fig:cifar10 best results within 100 epochs with variance} shows accuracy versus epochs for CIFAR-10 for 3 runs and hence variations are seen around mean; we show the mean by bold lines. The best accuracy results for CIFAR-10 dataset are also shown in Table \ref{table1_cifar10_results}. We observe that our method performs the best across all the architectures; with more pronounced accuracy for the DLA and VGG. The observed jump in performance at epoch 80 is due to the learning rate scheduler, which is set to reduce the learning rate by 10 at the 80th epoch. The ablation study of our method on CIFAR-10 is further described in the supplementary. 

The experimental results for CIFAR-100 are shown in the Table \ref{tab:table2_cifar100}, and the plots for accuracy versus epochs for multiple runs are shown in the Figure \ref{fig:cifar100 best results within 100 epochs with variance}.  We perform the best in most of the architectures, except on ResNet-50 and our method performs the second best for VGG-16. 

Mini-imagenet \cite{vinyals2016matching} dataset is a subset of the larget Imagenet dataset \cite{5206848}. We use the efficient net architecture for mini-imagenet because as shown in Figure 1 in the paper \cite{tan2019efficientnet}, efficient-net provides the best accuracy per parameter. The best accuracy results of the mini-imagenet is shown in Table \ref{tab:table3_miniimagenet}. We outperform all the standard state-of-the-art methods that use decaying expectations of the gradient terms (includes momentum based methods) to either scale the gradient or correct its direction. We perform best in EfficientNet-B0 and EfficientNet-B0 wide; second best in EfficientNet-B4 and EfficientNet-B4 wide while performing substantially better than the adaptive methods.

\begin{table}[ht]
   \begin{center}  
     \begin{tabular}{|l|c|c|c|c|c|c|}
       \hline
       Method & RN18 & RN34 & RN50 & D121 & VGG16 &  DLA \\
       \hline
       SGD & 72.94  & 73.31 & 73.82   & 74.82  & 69.24  & 73.84 \\
       Adam & 71.98  & 71.92  & 73.52   & 74.50  & 67.78  & 71.33  \\ 
       RMSProp & 68.90  & 69.66 & 71.13   & 71.74  & 65.79  & 67.91  \\
       AdaBelief & 73.07  & 73.44  & \underline{75.47}  & \underline{75.60}  & \textbf{69.63}  & \underline{74.09} \\
       diffGrad & 72.02  & 72.21  & 74.08 & 74.53  & 67.83  & 71.07  \\
       AGC & \underline{73.25}  & \underline{73.64}  & 75.46 & 75.36  & 69.21 &  72.84 \\
       AGT & 73.13  & 73.35  & \textbf{75.75}  & 75.39 & 68.85 & 73.06  \\
       OURS & \textbf{73.33 } & \textbf{74.22 }  & 74.44    & \textbf{76.79 } & \underline{69.46}  & \textbf{74.92 } \\  
    \hline
     \end{tabular}
   \end{center}
   \caption{\label{tab:table2_cifar100}Overall accuracies on CIFAR-100 first 100 epochs. The best results are shown in {\bf bold}, and the second best results are \underline{underlined}. We observe that our method has the best results on four architectures and the second best on one architecture. Here RN stands for ResNet, D121 is DenseNet-121, and DLA is Deep Layer Aggregation.}
 \end{table}

\begin{figure*}[ht]
  \centering
  \includegraphics[width=.32\linewidth]{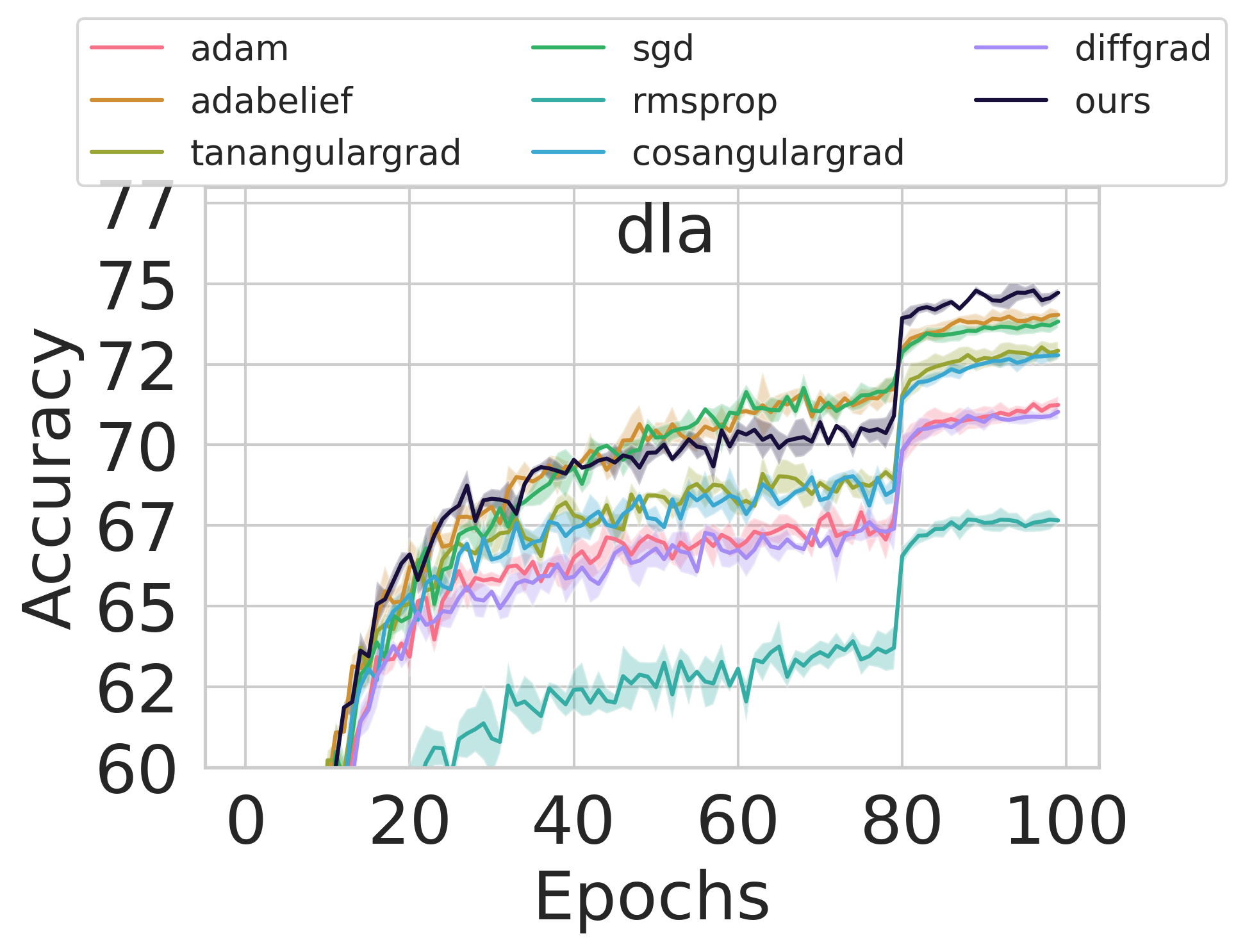}
  \includegraphics[width=.32\linewidth]{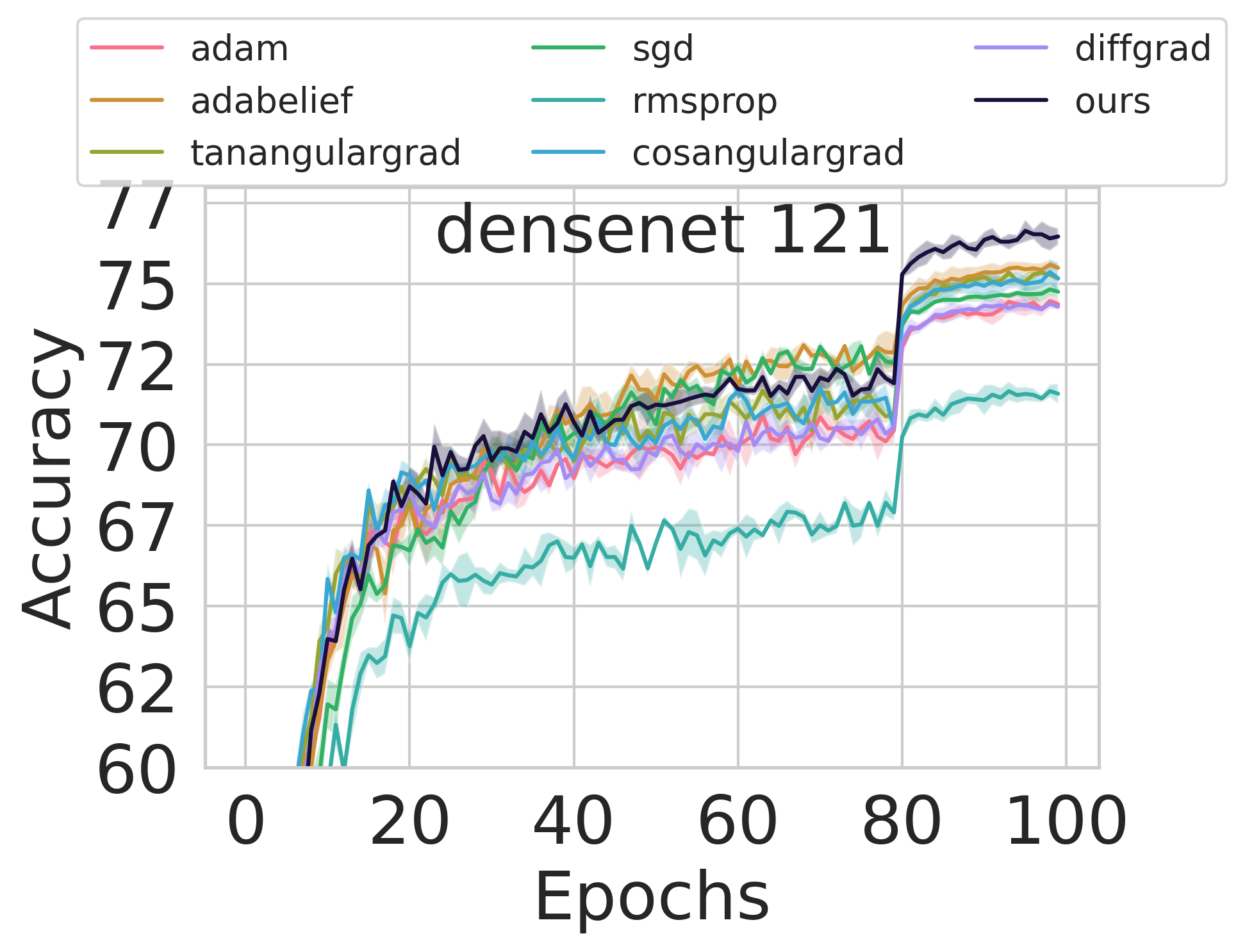}
  \includegraphics[width=.32\linewidth]{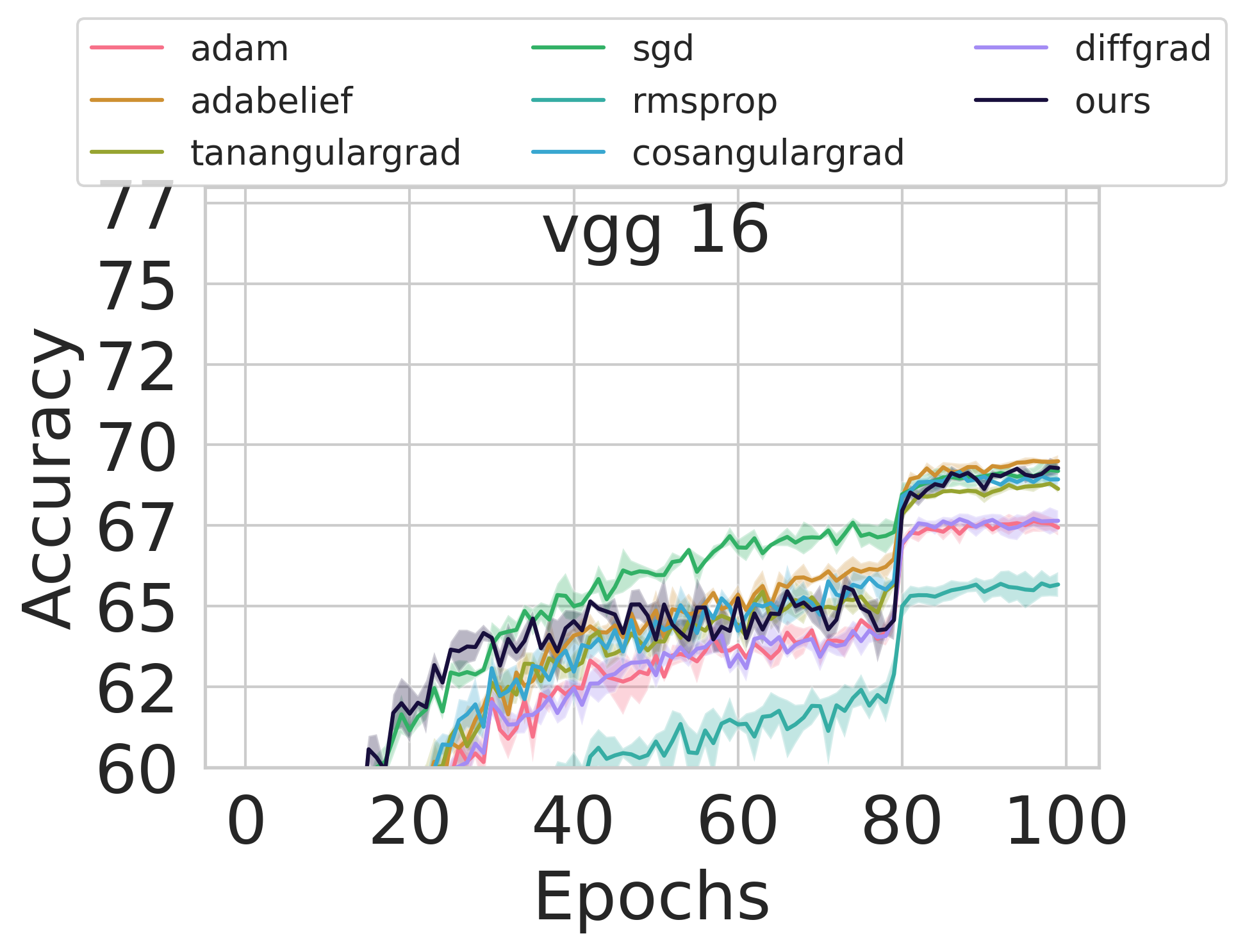}
  \includegraphics[width=.32\linewidth]{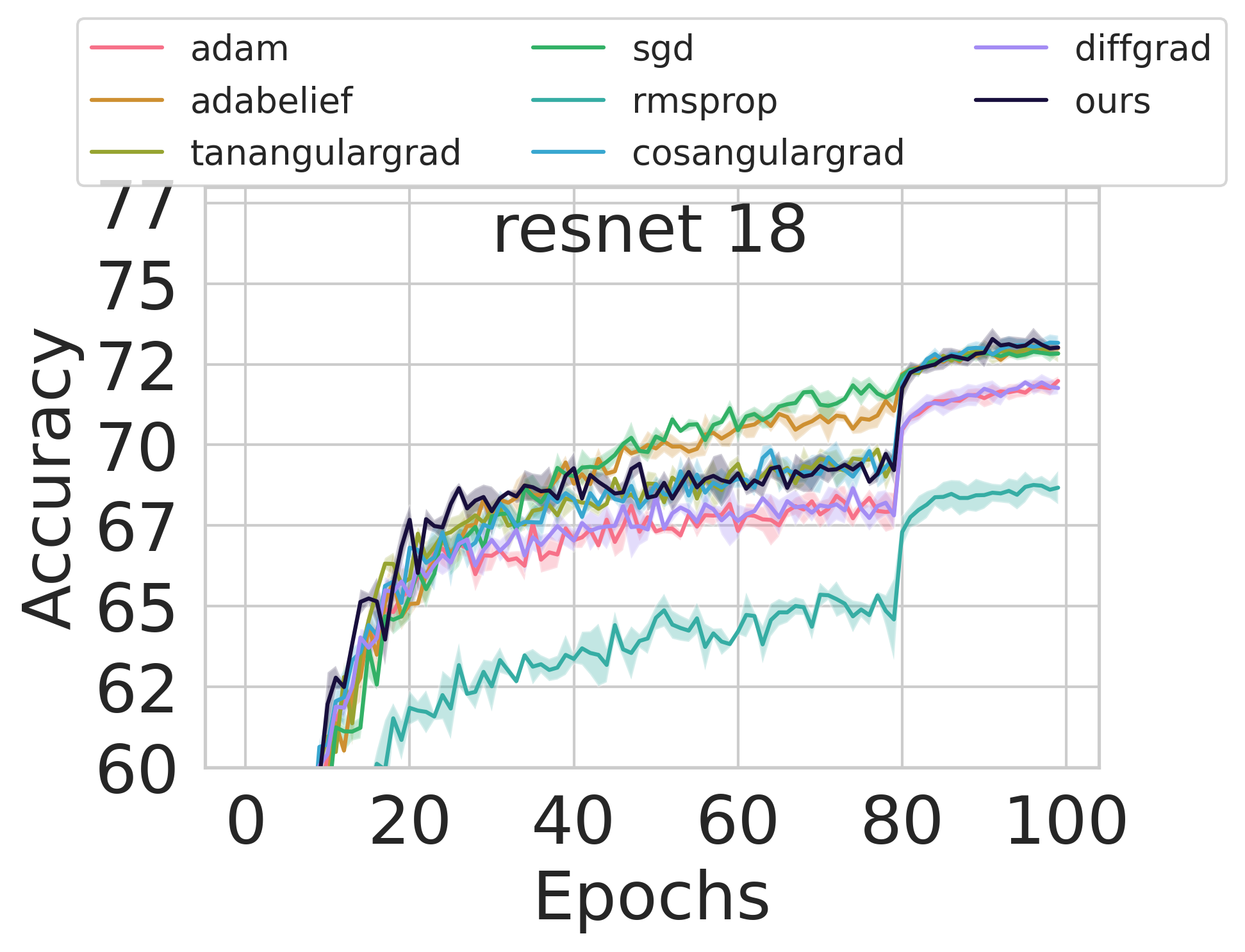}
  \includegraphics[width=.32\linewidth]{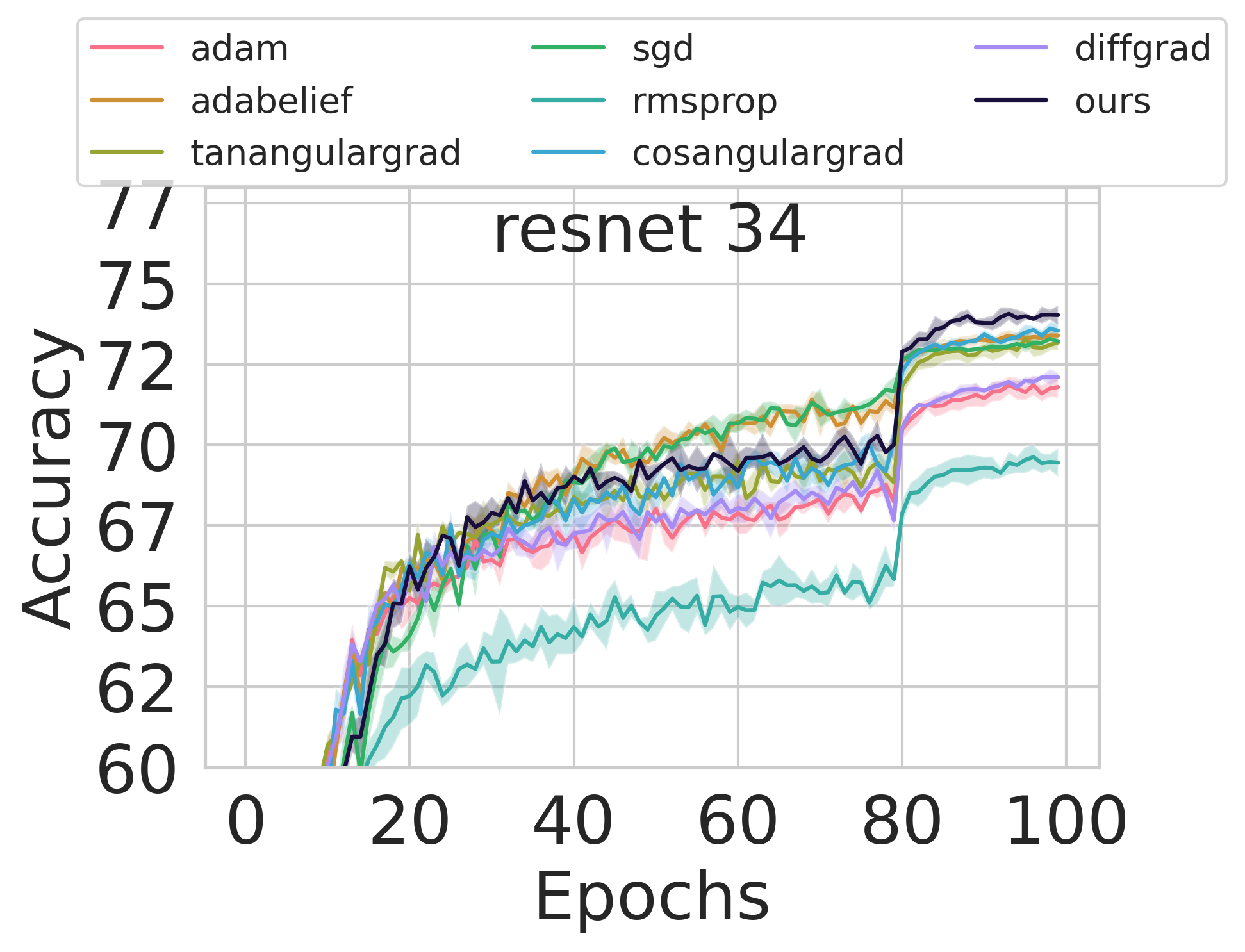}
  \includegraphics[width=.32\linewidth]{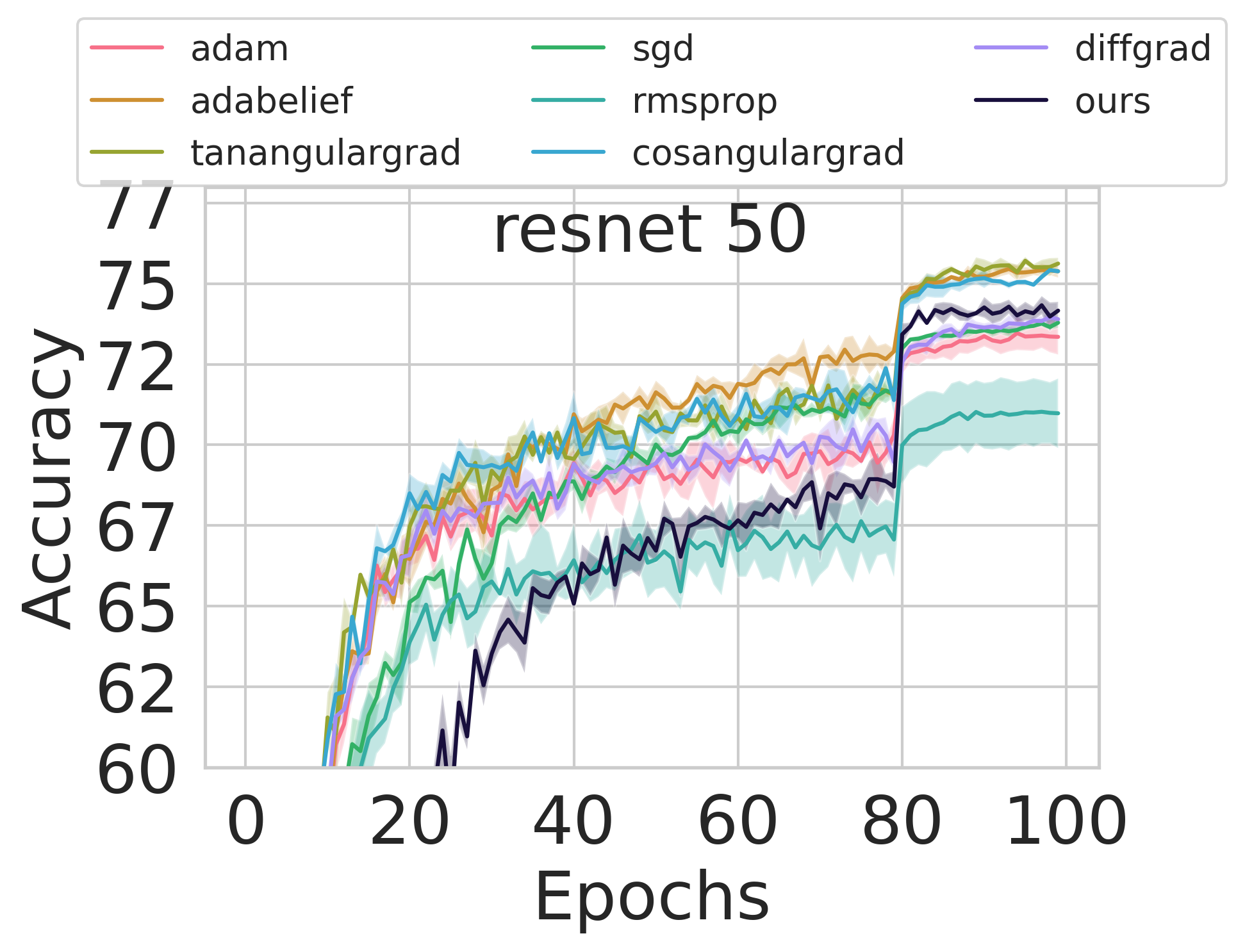}
  \caption{Accuracy plots versus epochs for image classification on CIFAR-100 dataset. Our method is represented by dark purple curve. Except for ResNet50 and VGG16, our method achieves the highest accuracy. The accuracy plots for longer number of epochs is shown in Figure \ref{fig:long_run}. For hyper-parameters and ablation study, please refer to supplementary material.}
  \label{fig:cifar100 best results within 100 epochs with variance}
\end{figure*}%

\section{Code Repository}
\href{https://github.com/misterpawan/dycent}{https://github.com/misterpawan/dycent}.

\section{Conclusion}
We propose a simple and novel method for finding an adaptive learning rate
for stochastic gradient-based descent methods for image classification tasks on prominent and recent convolution neural networks such as 
ResNet, VGG, DenseNet, EfficientNet. We observe that the proposed learning rate 
leads to higher accuracy compared to existing state-of-the-art methods such as AdaBelief, Adam, 
SGD, RMSProp, diffGrad, etc. We also show that the proposed learning rate leads to 
convergence and satisfies one of the Wolfe's condition also called Armijo's condition. The advantages of the proposed method is 
in change in learning rate which is found geometrically, and it is better than more 
costly momentum-based methods. Our future plans include analyzing and extending our method to other architectures and applications and expanding the theory to non-convex cases with a specific emphasis on improving the convergence rate.


\section*{Acknowledgement}
This work was funded by Microsoft Academic Partnership Grant (MAPG), IHUB at IIIT, and Qualcomm Faculty Award at IIIT, Hyderabad, India. We acknowledge the support of the host institute for compute resources. 


\bibliographystyle{plain}
\bibliography{aistats2023}

\end{document}